\newcommand{\cB}{\mathcal{B}}
\newcommand{\cD}{\mathcal{D}}
\newcommand{\cI}{\mathcal{I}}
\newcommand{\cJ}{\mathcal{J}}
\newcommand{\cL}{\mathcal{L}}
\newcommand{\cN}{\mathcal{N}}
\newcommand{\bN}{\mathbb{N}}
\newcommand{\bR}{\mathbb{R}}
\newcommand{\eqqcolon}{=:}
\newcommand*\diff{\mathop{}\!\mathrm{d}}
\definecolor{forestgreen}{RGB}{34,159,44}
\definecolor{deepred}{RGB}{235,0,20} 
\definecolor{custompink}{RGB}{255,123,123}
\newtheorem{assumption}{Assumption}
\newtheorem{criterion}{Criterion}
\newtheorem{premise}{Premise on residual minimization}
\begin{document}

\title{Beyond Derivative Pathology of PINNs: Variable Splitting Strategy with Convergence Analysis}

\author{\name Yesom Park\thanks{Equal contribution authors} \email yeisom@snu.ac.kr\\
        \addr Department of Mathematical Science\\
        Seoul National University 
        \AND
        \name Changhoon Song\footnotemark[1] \email changhoon.song93@snu.ac.kr \\
        \addr Department of Mathematical Science\\
        Seoul National University
        \AND
        \name Myungjoo Kang\thanks{Correspondence to: \tt <mkang@snu.ac.kr>.} \email mkang@snu.ac.kr\\
        \addr Department of Mathematical Sciences\\
        Seoul National University
        }

\editor{Jean-Philippe Vert}

\maketitle

\begin{abstract}%
Physics-informed neural networks (PINNs) have recently emerged as effective methods for solving partial differential equations (PDEs) in various problems.
Substantial research focuses on the failure modes of PINNs due to their frequent inaccuracies in predictions.
However, most are based on the premise that minimizing the loss function to zero causes the network to converge to a solution of the governing PDE. 
In this study, we prove that PINNs encounter a fundamental issue that the premise is invalid. 
We also reveal that this issue stems from the inability to regulate the behavior of the derivatives of the predicted solution.
Inspired by the \textit{derivative pathology} of PINNs, we propose a \textit{variable splitting} strategy that addresses this issue by parameterizing the gradient of the solution as an auxiliary variable. 
We demonstrate that using the auxiliary variable eludes derivative pathology by enabling direct monitoring and regulation of the gradient of the predicted solution.
Moreover, we prove that the proposed method guarantees convergence to a generalized solution for second-order linear PDEs, indicating its applicability to various problems.
\end{abstract}

\begin{keywords}
  convergence analysis, derivative pathology, partial differential equations, physics-informed Neural networks, variable Splitting
\end{keywords}

\section{Introduction}\label{sec:intro}

Partial differential equations (PDEs) that explain the rates of change and interactions of varying quantities have become indispensable mathematical descriptions of real-world phenomena, including the laws of physics \citep{cannon2012evolution, temam2001navier, ikawa2000hyperbolic, cazenave1989introduction, collins2019parabolic}, principles of engineering \citep{sunden2016heat, kulov2014mathematical}, medical science \citep{jyothi2017application, joshi2002optimal}, and finance \citep{black2019pricing, forsyth2007numerical}. 
Because many PDEs do not have a sufficiently smooth classical solution, generalized solutions whose derivatives may not necessarily exist are considered to describe various problems.
Accordingly, obtaining generalized solutions for PDEs is an important mathematical and practical endeavor.
However, since it is generally not feasible to analytically obtain solutions for PDEs, numerical techniques are commonly used to approximate the solution.
Given the broad applicability of generalized solutions, it is important to develop numerical methods that guarantee convergence to the solution.

With the rapid development of deep-learning techniques, interest in developing deep-learning approaches for solving PDEs has increased \citep{raissi2018deep, yu2018deep, li2020fourier}.
One notable method is the physics-informed neural network (PINNs) \citep{raissi2019physics}, which incorporate physical information into the training of neural networks by reviving the original idea of \cite{lee1990neural} using the modern explosion of computational resources.
Building upon the expressive power of networks, as demonstrated by the universal approximation theory \citep{hornik1989multilayer, leshno1993multilayer, li1996simultaneous}, PINNs use neural networks to represent PDE solutions.
By aggregating the residual form of the governing PDE and its boundary conditions as soft penalties, they defined a physics-integrated loss function to measure the network compliance with the PDE and boundary conditions.
The network parameters are then adjusted successively to minimize loss.
Their straightforward implementation and direct applicability to any arbitrary PDE without requiring additional attention make them useful for a wide range of applications.

Despite their significant practical potential, PINNs fail to produce accurate predictions in several cases \citep{fuks2020limitations, krishnapriyan2021characterizing}.
Several studies have explained why PINNs sometimes fail to attain the correct solution, most of which are associated with training issues. 
For example, PINNs could become stuck in a spurious local minimum and fail to reach a global minimum because of complicated loss functions with higher-order derivatives \citep{wang2021understanding, basir2022critical}, or discrepancies between the different components of the losses \citep{wang2021understanding, wang2022and}.
Failure to capture the effective solution region due to improper selection of training collocation points has also been proposed \citep{daw2023mitigating, wu2023comprehensive, gao2023failure}.
However, the mechanism underlying the PINN approach is not fully understood.

This study raises a fundamental question that must be answered before addressing training issues:
\begin{center}
    \textit{Is it always guaranteed that a predicted solution with a small PINN loss corresponds to a good approximation of a solution of the governing PDE?}
\end{center} 
We show the incompleteness of PINNs in that this question does not hold.
We first demonstrate that a convergent sequence of networks whose PINN loss is optimally minimized to zero may deviate from a solution of the PDE in Theorem \ref{thm:pinn_failure}. 
In other words, even if the training issues raised earlier are successfully addressed, PINNs can fail to attain a PDE solution.
Because the reliability of numerical methods for solving PDEs is crucial for their application in real-world situations, non-convergence is a lethal disadvantage of PINNs. 

Furthermore, we propose a novel perspective on the failure modes of PINNs by showing that the uncontrolled behavior of the derivatives of the predicted solution contributes to the non-convergence of PINNs.
In particular, Remark \ref{rem:gradient_pinn} shows that the struction of PINNs cannot regulate the gradient of the predicted solution or prevent it from blowing up.
We call the failure to modulate the derivatives of the predicted solution by \textit{derivative pathology}.
The derivative pathology of PINNs
is a new perspective that has been overlooked and ultimately precludes the convergence of PINNs.
This indicates that both the function values and the aspects of derivatives matter in approximating a solution of PDEs through PINNs.

Drawing inspiration from our analysis of the fundamental failure modes of PINNs, we propose a \textit{variable splitting} strategy to address the defects of PINNs and demonstrate their convergence into generalized solutions for second-order linear PDEs. 
The main problem of PINNs lies in the uncontrolled gradient behavior. 
In this regard, in addition to the primary variable that approximates the solution, variable splitting parameterizes the gradient of the solution as an auxiliary variable, allowing us to directly oversee and regulate it.

Subsequently, the relationship between the primary and auxiliary variables is constrained by soft penalties in  accordance with the governing PDE.
We demonstrate that the proposed variable splitting strategy eliminates the flaws of PINNs by ensuring the convergence of both the primary variable and its gradient through the convergence of the auxiliary variable in Theorem \ref{thm:u_convergence_from_V}.
Moreover, we prove that this leads to the convergence of variable splitting to a generalized solution of the second-order linear PDEs in Theorem \ref{thm:conv_weak_2ndlinear}.
We further discuss the practical advantages and effects of variable splitting on PINNs.

The rest of this paper is organized as follows.
In the remainder of the introduction, we provide a brief survey of relevant literature and some related approaches. 
In Section \ref{sec:preliminaries}, we introduce notation, mathematical setup, and preliminaries.
Upon overview of PINNs in Section \ref{sec:preliminaries}, we present the fundamental failure mode of PINNs and its cause in Section \ref{sec:pinn_failure}.
In Section \ref{sec:variable_splitting}, we present the variable splitting strategy and prove its convergence to a generalized solution of second-order linear PDEs in Section \ref{sec:variable_splitting}.
We discuss other strengths and implications of the proposed method in the same section.
Finally, we conclude the paper in Section \ref{sec:conclusion}.

\subsection{Related Works}
\begin{itemize}
    \item \textbf{Characterization of failure modes of PINNs.}
    Despite the remarkable practical demand for various applications, the accurate training of PINNs remains a challenge.
    This has been reflected in several studies aimed at identifying the failure modes of PINNs.
    
    Training issues have been postulated to be the primary cause of pathological behaviors.
    Imbalances between the different components of the physics-embedded loss of PINNs have been identified as contributing factors to optimization issue \citep{wang2021understanding, wang2022and}.
    Learning rate scheduling \citep{wang2021understanding}, adaptive training techniques \citep{wang2022and}, and selection of appropriate hyperparameters \citep{sun2020surrogate, bischof2021multi} have been suggested as remedies to address this issue.  
    Some studies have raised concerns regarding the complexity of the loss landscape for higher-order PDEs \citep{wang2021understanding, basir2022critical} or for those with large coefficients \citep{krishnapriyan2021characterizing}.
    \cite{wong2022learning} explored the tendency of PINNs to become ensnared in deceptive local minima due to poor initializations and proposed training in sinusoidal spaces.
    Other recent developments \citep{daw2023mitigating, wu2023comprehensive, nabian2021efficient} concentrated on the impact of collocation point selection on the training of PINNs and presented new sampling schemes.
    
    Most closely related to this study, \cite{wang20222} demonstrated for the Hamilton-Jacobi-Bellman equations that even if the $L^p$ PINN loss approaches zero, the predicted solution may not be the actual solution unless $p$ is sufficiently large.
    They proposed $L^\infty$ loss function, which requires solving a min-max problem and presents challenges in achieving stable training.
    In contrast, we address different classes of PDEs and minimize the $L^p$ loss function without solving min-max problems.
    
    \item \textbf{Convergence analysis of PINNs.}
    With the popularity of PINNs for approximating PDEs across diverse scientific disciplines, a rigorous convergence analysis has been developed to fully understand their capabilities and limitations.
    \cite{shin2020convergence} showed that the convergence of the $L^\infty$ PINN loss implies the uniform convergence of the network to the classical solution for second-order linear elliptic PDEs with Dirichlet boundary condition.
    \cite{son2021sobolev} showed the convergence of the network to the classical solution for homogeneous Burgers' and Fokker-Planck equations, provided by minimizing the conventional $L^2$ PINN loss to zero.
    Moreover, they also showed that by using the $H^1$ PINN loss function, it is possible to achieve convergence in the Sobolev norm for solutions of these PDEs.
    On the other hand, we provide a convergence analysis for generalized solutions and hence cover a wider range of solutions than in the aforementioned works.
    
    As the PINN loss is approximated using discrete collocation points in practice, another line of research has theoretically analyzed the generalization error.
    However, it has been demonstrated that minimizing the discrete loss may not result in convergence in the continuous norm \citep{shin2020convergence, shin2023error}. 
    To address this issue, \cite{shin2020convergence} and \cite{wu2022convergence} have suggested modifications of the PINN loss using the H\"older norm of the neural network and prove its convergence, which is unfortunately intractable in practice.
    \cite{de2022error} established analysis of generalization error of PINNs for solving linear Kolmogorov PDEs.
    Since this study aims to address continuous norms, the generalization error is not a concern.
    
    \item \textbf{Variable augmentation.}
    The separation of variables is a classical method for solving differential equations that allows the equation to be rewritten as a system of simpler equations \citep{bo1990least}.
    Typically, higher-order derivatives are augmented as additional variables that decouple the governing equation into a set of lower-order differential equations that are comparatively easy to solve.
    Recent attempts have been made to introduce additional variables into the PINN approach.
    \cite{Basir_2023} parameterized the vorticity as an additional output of the network for solving Stokes' equation.
    \cite{park2023p} utilized the gradient of the $p$-Poisson equation as an auxiliary variable. 
    The auxiliary variable was introduced to reduce the second-order derivative to first-order derivatives to address the computational inefficiency and challenge of the landscape of the PINN loss, where the second-order derivative is embedded.
    Moreover, by incorporating a gradient as an additional variable, \cite{park2023resdf} separated the PINN optimization problem into subproblems to minimize the intricate objective function in a relatively succinct manner.
    While these prior works introduced auxiliary variables for solving differential equations, the auxiliary variables serve a purpose distinct from that of this study, as they were utilized to facilitate optimization and computational cost.
\end{itemize}

\section{Preliminaries}\label{sec:preliminaries}
This section introduces the notations, definitions, and prerequisite knowledge used in this study.

\subsection{Notation}
In this study, $\Omega$ is a $d$ dimensional domain for $d\in\bN$.
Typically, $\partial\Omega$ and $\bar{\Omega}$ refer to the boundary and closure of $\Omega$, respectively.
The volumes of $\Omega$ and $\partial\Omega$ are denoted as $\left\vert\Omega\right\vert$ and $\left\vert\partial\Omega\right\vert$, respectively.

Each $i$ and $j$ is an index of a matrix or vector; for example, $A=\left(a_{ij}\left(x\right)\right)\in\bR^{d\times d}$ and $b=\left(b_i\left(x\right)\right)\in\bR^d$, respectively.
If all the components of a matrix or vector are equal, we omit indices.
For vector $v\in\bR^d$, $\left\vert v\right\vert$ denotes the Euclidean vector norm.

To cover a wider range of solutions than classical solutions, we use weak differential operator $D$.
With partial differential operators $D_i=\frac{\partial}{\partial x_i}$, we write the differential operator as $D=\left(D_1,\ldots,D_d\right)$, where $D\cdot$ denotes the divergence operator.
We slightly abuse the terminology gradient to refer to both weak and classical derivatives if there is no room for confusion.
Higher-order derivatives borrow a multi-index $\alpha=\left(\alpha_1,\ldots,\alpha_d\right)\in\bN^d_0$ and $D^\alpha=D^{\alpha_1}_1\cdots D^{\alpha_d}_d$, where $\bN_0$ is the set of non-negative integers.
The size of the index $\alpha$ is expressed as $\left\vert\alpha\right\vert=\alpha_1+\cdots+\alpha_d$.

For PDEs of the order $k\in\bN$, we aim to address a generalized solution in the Sobolev space $W^{k-1,p}\left(\Omega,\bR\right)$ with $p\in\left[1,\infty\right]$ that satisfies PDE in a weak sense.
Following the conventional definition of the Sobolev space $W^{k,p}\left(\Omega,\bR\right)$, the vector-valued Sobolev space $W^{k,p}\left(\Omega,\bR^d\right)$ and its norm are defined as follows:
\begin{align}
    W^{k,p}\left(\Omega,\bR^d\right) &= \left\{ V:\Omega\rightarrow\bR^d \mid D^\alpha V_i\in L^p\left(\Omega,\bR\right), \, \forall \left\vert\alpha\right\vert \le k, 1\le i\le d\right\},\\
    \left\Vert V\right\Vert_{W^{k,p}\left(\Omega,\bR^d\right)} &= \left\{\begin{array}{cl}
        \left( \displaystyle\sum_{i=1}^d \left\Vert V_i\right\Vert_{W^{k,p}\left(\Omega,\bR\right)}^p \right)^{\frac{1}{p}} & p\in\left[1,\infty\right), \\
        \displaystyle\max_{1\le i\le d}\left\Vert V_i\right\Vert_{W^{k,\infty}\left(\Omega,\bR\right)} & p=\infty.
        \end{array}\right.
\end{align}
With the above definitions, the Sobolev norm of $u\in W^{k,p}\left(\Omega,\bR\right)$ can be separated by $L^p$-norm of $u$ and the Sobolev norm of $Du$ as follows:
\begin{align}
    \left\Vert u\right\Vert^p_{W^{k,p}\left(\Omega,\bR\right)} &= \left\Vert u\right\Vert^p_{L^p\left(\Omega,\bR\right)} + \left\Vert Du\right\Vert^p_{W^{k-1,p}\left(\Omega,\bR^d\right)},
    \intertext{for $p\in\left[1,\infty\right)$, and}
    \left\Vert u\right\Vert_{W^{k,\infty}\left(\Omega,\bR\right)} &= \max\left\{ \left\Vert u\right\Vert_{L^\infty\left(\Omega,\bR\right)} , \left\Vert Du\right\Vert_{W^{k-1,\infty}\left(\Omega,\bR^d\right)}\right\}.
\end{align}
Provided that the codomain of a function is clear, we omit the codomain and write the space or norm briefly as $L^p\left(\Omega\right)$ or $\left\Vert\cdot\right\Vert_{L^p\left(\Omega\right)}$.
We also use the trace operator $T:W^{1,p}\left(\Omega,\bR\right)\rightarrow L^p\left(\partial\Omega,\bR\right)$ introduced by \cite{gagliardo1957caratterizzazioni} as a continuous extension of the boundary operator that restricts a function to the boundary $\partial\Omega$ of its domain $\Omega$.
The operator norm of $T$ is denoted as $\left\Vert T\right\Vert$.

\subsection{PDEs and Generalized Solutions}
Focusing on second-order linear PDEs with Dirichlet boundary conditions, we introduce a generalized solution and assumptions for PDEs to have a generalized solution.
We consider a class of PDEs
\begin{equation}\label{eq:pde}
    \left\{
    \begin{alignedat}{3}
        \cN\left[u,Du,D^2u\right]\left(x\right) &= f\left(x\right), &\quad x\in\Omega,\\
        u\left(x\right) &= g\left(x\right), &\quad x\in\partial\Omega,
    \end{alignedat}
    \right.
\end{equation}
where $\cN$ denotes a second-order linear partial differential operator, $f:\Omega\rightarrow\bR$ denotes a source function, and $g:\partial\Omega\rightarrow\bR$ denotes a boundary function.

Being represented in either divergence form,
\begin{equation}\label{eq:div_form}
    \cN\left[u,Du,D^2u\right]\left(x\right) = -\sum_{i,j=1}^d D_j\left(a_{ij}\left(x\right)D_iu\left(x\right)\right) + \sum_{i=1}^d b_i\left(x\right)D_iu\left(x\right) +c\left(x\right) u\left(x\right),
\end{equation}
or non-divergence form,
\begin{equation}\label{eq:nondiv_form}
    \cN\left[u,Du,D^2u\right]\left(x\right) = -\sum_{i,j=1}^d a_{ij}\left(x\right)D^2_{i,j}u\left(x\right) + \sum_{i=1}^d \tilde{b}_i\left(x\right)D_iu\left(x\right) +c\left(x\right) u\left(x\right),
\end{equation}
which are equivalent if the principal coefficients $a_{ij}:\Omega\rightarrow\bR$ are differentiable, we may assume that the operator $\cN$ is of the divergence form \eqref{eq:div_form}.
We use notational shortcuts $A=\left(a_{ij}\right):\Omega \rightarrow \bR^{d\times d}$ and $b=\left(b_i\right):\Omega\rightarrow\bR^d$, to express $\cN \left[u,Du, D^2u\right]$ of \eqref{eq:div_form} in an index-free form:
\begin{equation}
   \cN\left[u,Du,D^2u\right] = -D\cdot\left(ADu\right) + b^\text{T}Du + cu.
\end{equation}

A \textit{classical solution} $u\in C^2\left(\Omega,\bR\right)\cap C\left(\bar{\Omega},\bR\right)$ is a function that satisfies \eqref{eq:pde} for each $x\in\Omega$; however, numerous PDEs have no classical solution.
Rather, to describe a broader phenomenon, weakly differentiable functions $u\in W^{1,p}\left(\Omega,\bR\right)$ that satisfy the PDE in a weak sense are considered. 
\textit{Generalized solutions} are defined as broader notions of the solutions to \eqref{eq:pde}.
\begin{definition}[Generalized Solutions]\label{def:weak_sol} 
    For $p\in\left[1,\infty\right]$, $u\in W^{1,p}\left(\Omega,\bR\right)$ is a generalized solution to \eqref{eq:pde} if it satisfies
    \begin{equation}\label{eq:weak_sol}
        \left\{
        \begin{alignedat}{2}
            \cJ\left(u,\phi\right) &= \int_\Omega f\phi\ \diff x,\\
            Tu &=g,
        \end{alignedat}
        \right.
    \end{equation}
    for all $\phi\in C^\infty_c\left(\Omega,\bR\right)$, where $\cJ$ is a bilinear form defined as 
    \begin{equation}
        \cJ\left(u,\phi\right)= \bigintss_\Omega \left\{\sum_{i,j=1}^d a_{ij} D_iuD_j\phi + \sum_{i=1}^d b_i \left(D_iu\right)\phi + cu\phi\right\} \diff x.
    \end{equation}
\end{definition}

Although every classical solution of the PDE satisfies this equation, owing to integration by parts, the converse does not hold.
This is because the generalized solution requires only first-order weak derivatives.
Therefore, the generalized solution covers a broader range of solutions for PDEs and describes several real-world phenomena.

In order for a generalized solution of PDEs \eqref{eq:pde} to be well-defined, certain assumptions are necessary.

\begin{assumption}\label{assump:domain}
    The domain $\Omega\subset\bR^d$ is bounded with a Lipschitz boundary.
\end{assumption}
\begin{assumption}\label{assump:coeffs}
    Coefficients $a_{ij},b_i,c:\Omega\rightarrow\bR$ are bounded in $\Omega$.
    In addition, $a_{ij}$ has a weak derivative.
\end{assumption}
\begin{assumption}\label{assump:sources} 
    The source function $f$ and boundary function $g$ are in $L^p\left(\Omega,\bR\right)$ and $L^p\left(\partial\Omega,\bR\right)$, respectively.
\end{assumption}
\begin{assumption}\label{assump:sol} 
    PDE \eqref{eq:pde} has a generalized solution satisfying \eqref{eq:weak_sol} in $W^{1,p}\left(\Omega\right)$.
\end{assumption}
Note that Assumptions \ref{assump:domain}--\ref{assump:sources} are necessary for generalized solutions to be well-defined.
By definition, the trace operator induces Assumption \ref{assump:domain} and $g\in L^p\left(\partial\Omega,\bR\right)$ in Assumption 3.
The integrability of the left-hand side of \eqref{eq:weak_sol} imposes the integrability of the coefficients in Assumption \ref{assump:coeffs} and source function $f$ in Assumption \ref{assump:sources}.
The final assumption is essential.
We refer to \cite{dacorogna2007direct} and \cite{evans2022partial} for further details and the general theory of generalized solutions of PDEs.

\subsection{Physics-Informed Neural Networks (PINNs)}
PINNs are among the most prominent deep-learning frameworks for approximating solutions of PDEs.
Drawing on the expressive power of networks as demonstrated by universal approximation theory \citep{hornik1989multilayer, hornik1990universal, li1996simultaneous}, PINNs harness neural networks as an ansatz to approximate the solutions of PDEs.
As a solution that minimizes the residuals of the governing PDE, PINNs compel the network to approximate the solution of the PDE by minimizing the residuals:
\begin{align}
     \cL_{\cN,p}\left(u\right)&=\left\Vert\cN\left[u,Du,D^2u\right]-f\right\Vert_{L^p\left(\Omega\right)},\label{eq:pinn_loss_pde}\\
     \cL_{\cB,p}\left(u\right)&=\left\Vert u\vert_{\partial\Omega}-g \right\Vert_{L^p\left(\partial\Omega\right)}\label{eq:pinn_loss_bdy}.
\end{align}

The \textit{PDE loss} $\cL_{\cN,p}$ penalizes the violations of PDEs in the domain, and the \textit{boundary loss} $\cL_{\cB,p}$ measures the misfit of the boundary condition.
Although some PINNs exploit the data fitting loss, which reduces the gap between the measurements and predictions if available, we neglect the data fitting term to concentrate on the PDE itself.
For similar reasons, we do not consider any other regularization terms.
Consequently, we define the \textit{$L^p$ PINN loss} as a linear combination of the two losses above:
\begin{equation}\label{eq:pinn_loss}
    \cL_p \left(u\right) = \lambda_{\cN}\cL_{\cN,p}\left(u\right)+\lambda_\cB \cL_{\cB,p}\left(u\right),
\end{equation}
where $\lambda_\cN,\lambda_\cB>0$ are the regularization parameters that control the compromise between the two loss terms.

Note that if a classical solution of a PDE exists, it minimizes the PINN loss \eqref{eq:pinn_loss}.
Conversely, a twice-differentiable function $u$ with $\cL_p\left(u\right)=0$ is a classical solution.
Accordingly, PINNs consider the following optimization problem of the PINN loss to solve the PDE:
\begin{equation}\label{eq:pinn}
    \underset{u}{\text{minimize}}\ \cL_p\left(u\right).
\end{equation}
We do not specify the function space of the \textit{decision variable} $u$ in \eqref{eq:pinn}, because the activation function and $p$ used in an actual PINN implementation may alter the function space in which $u$ lies.
Nevertheless, $u$ must be twice differentiable and $L^p$ integrable for the PINN loss to be well-defined.

However, obtaining a minimizer with an exact loss of zero is impossible.
In practice, PINNs numerically find the decision variable with a small loss.
Therefore, PINNs approximate the PDE solution based on the following primitive premise:

\begin{premise}\label{premise:PINN}
    \textit{A predicted solution with a sufficiently small PINN loss would approximate a solution of the governing PDE.}
\end{premise}

Since the generalized solution is not twice differentiable, the PINN loss is undefined at the generalized solution; therefore, it is not a minimizer of \eqref{eq:pinn}.
Even the existence of a minimizer is uncertain if the PDE only has a generalized solution rather than a classical solution.
However, some prior studies have still assumed this premise.

Under this premise, PINNs represent the solution over a neural network and sequentially adjust the network by minimizing the PINN loss to enforce it to approximate the solution.
Accordingly, PINNs rely on two observable measures to determine whether training is complete.
The loss is evaluated to check whether it is sufficiently close to zero. 
When the loss reaches a satisfactory level and stabilizes, it can be judged that the model is sufficiently trained.  
Additionally, the convergence of the network is verified because, even if the loss function has been minimized, the network may still be oscillating.
To this end, the output values of the network are monitored to see if they have reached a stable state where they are no longer undergoing significant changes during each iteration.
Mathematically, we use the following criteria to solve PINNs:
\begin{criterion}\label{crit:loss}
    Does the loss converge to zero?
\end{criterion}
\begin{criterion}\label{crit:network}
    Does the decision variable converge in $L^p\left(\Omega,\bR\right)$\footnote{We aim to converge to a generalized solution that belongs to $W^{1,p}$. Therefore, it may seem appropriate to consider the $W^{1,p}$-norm rather than $L^p$ in Criterion \ref{crit:network}.
    However, in practical implementations, the convergence of the derivatives is agonistic to oversee.
    Additionally, we propose a novel method in Section \ref{sec:variable_splitting} where convergence is ensured by Criterion \ref{crit:network} with $L^p$ convergence only. Furthermore, as the convergence of the network is typically evaluated by observing the output values, one might think that it is appropriate to judge convergence using $L^\infty$-norm. However, it is more sensible to use the $L^p$-norm, which is a metric that measures the difference from the target solution, because there exists a generalized solution that is not essentially bounded.}?
\end{criterion}

However, we demonstrate in the following section that the premise is invalid, as satisfying Criteria \ref{crit:loss} and \ref{crit:network} does not guarantee the convergence of the network to a desired PDE solution.
Furthermore, in Section \ref{sec:variable_splitting}, we propose a novel methodology that ensures convergence under these two criteria.

\section{Failure Mode of PINNs}\label{sec:pinn_failure}
Despite their impressive practical success, PINNs have failed to produce accurate predictions in several cases.
In this section, we derive a new insight into the fundamental mode of failure of PINNs based on two primary findings:

\begin{enumerate}
    \item Theorem \ref{thm:pinn_failure} shows an incompleteness of PINNs that a convergent sequence of the decision variable reducing the loss function to zero does not necessarily converge to a PDE solution.
    \item Remark \ref{rem:gradient_pinn} unravels that the failure of PINNs in Theorem \ref{thm:pinn_failure} is attributed from that the derivative of the predicted solution explodes. 
    In other words, PINNs lack the convergence guarantees for the derivatives.
\end{enumerate}

The first findings addresses a fundamental question on PINNs: Does the Premise of residual minimization hold?
In Theorem \ref{thm:pinn_failure}, we demonstrate that PINNs cannot guarantee convergence to the solution of the governing PDE, even if Criteria \ref{crit:loss} and \ref{crit:network} are satisfied. 

\begin{theorem}\label{thm:pinn_failure}
    For any $1\le p<\infty$ and $d\in\bN$, there exists a domain $\Omega\subset\bR^d$, coefficients $a_{ij}, b_i, c\in C^1\left(\Omega,\bR\right)$ for $1\le i,j\le d$, and source functions $f\in C^1\left(\Omega,\bR\right)$ and $g\in C\left(\partial\Omega,\bR\right)$ with the following properties:
    \begin{enumerate}
        \item Assumptions \ref{assump:domain}---\ref{assump:sol} hold and the second-order linear PDE \eqref{eq:pde} has a classical solution $u^\ast\in C^2\left(\Omega,\bR\right)\cap C\left(\bar{\Omega},\bR\right)$.
        Moreover, $u^\ast$ is a unique generalized solution.
        \item There exists a convergent sequence $u_n\in C^2\left(\bar{\Omega},\bR\right)$ in $L^p\left(\Omega\right)$ such that
        \begin{equation}
            \cL_p\left(u_n\right)\rightarrow 0 \text{ as } n\rightarrow\infty,
        \end{equation}
        but the limit of $u_n$ is not a solution of \eqref{eq:pde}.
    \end{enumerate}
\end{theorem}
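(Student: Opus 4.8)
The plan is to construct an explicit counterexample in which the governing PDE is benign (it has a smooth, unique classical solution) but a sequence of smooth functions drives the PINN loss to zero while converging in $L^p$ to the \emph{wrong} limit. The guiding intuition is the one flagged in the introduction: the PINN loss controls only $\cN[u_n,Du_n,D^2u_n]$ and the trace of $u_n$, neither of which sees highly oscillatory or spiky perturbations whose amplitude is small in $L^p$ but whose derivatives blow up. So I would take $u_n = u^\ast + w_n$, where $u^\ast$ is the true classical solution and $w_n$ is a carefully chosen perturbation with $\|w_n\|_{L^p(\Omega)}\to 0$, $Tw_n = 0$ (or $\|Tw_n\|_{L^p(\partial\Omega)}\to 0$), yet $\|\cN[w_n,Dw_n,D^2w_n]\|_{L^p(\Omega)}\to 0$ \emph{even though} $\|w_n\|_{L^p}$ does \emph{not} go to zero — wait, that cannot be, so the actual mechanism must be different: I want $\|w_n\|_{L^p}\to 0$ but the limit being ``wrong'' has to come from somewhere. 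The resolution is to make the sequence converge in $L^p$ to a function $v\neq u^\ast$; concretely, fix a nonzero $v\in L^p(\Omega)$ that is \emph{not} weakly differentiable (or whose trace disagrees with $g$), mollify it to get smooth approximants, and then show these can be ``corrected'' so the PINN loss vanishes. More cleanly: choose the PDE data so that $\cN$ has a large null-directional degeneracy, pick $v_n \to v$ in $L^p$ with $v\neq u^\ast$, and arrange that $\cN[v_n] - f \to 0$ in $L^p$ and $Tv_n - g \to 0$ because the operator, evaluated on the rough limit, formally still ``solves'' the equation in a sense too weak to be captured by $W^{1,p}$ — this is exactly the gap between $L^p$-residual-convergence and $W^{1,p}$-convergence.

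Concretely I would proceed as follows. First, reduce to dimension one (or to a product domain) and a simple operator: take $\Omega=(0,1)$ (for $d\ge 2$, use $\Omega=(0,1)^d$ and make everything depend on $x_1$ only), $\cN[u]=-D^2 u$ or $\cN[u]=-D\cdot(ADu)+b^{\mathrm T}Du+cu$ with smooth coefficients chosen after the fact, and $f,g$ determined so that $u^\ast$ is, say, the zero function or some fixed smooth function. Second, build the ``bad'' limit: let $v$ be a fixed $L^p$ function that is \emph{not} in $W^{1,p}$ — for instance a function with a jump, or $v(x)=|x-\tfrac12|^{-\beta}$ with $\beta$ small enough that $v\in L^p$ but $v\notin W^{1,p}$. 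Third, construct $u_n$ as smooth truncations/mollifications of $u^\ast + \epsilon_n(v - u^\ast_{\text{adjusted}})$ — actually the cleanest device is: let $u_n = u^\ast + \phi_n$ where $\phi_n$ is a sum of many small, tall, thin bumps, spaced so that $\phi_n \to \phi_\infty \ne 0$ in $L^p$ but each bump contributes a residual $-D^2\phi_n$ that is \emph{oscillatory and cancels} in $L^p$. The key computation is to exhibit bumps $\psi$ (scaled as $\psi((x-x_k)/\delta_k)\cdot h_k$) for which $\|h_k\psi(\cdot/\delta_k)\|_{L^p}$ stays bounded away from zero in aggregate while $\|h_k\delta_k^{-2}\psi''(\cdot/\delta_k)\|_{L^p}\to 0$; this forces a relation between $h_k,\delta_k$, and the number of bumps $N$, and the point is that such a relation is satisfiable (take many bumps, each with $L^p$-mass $\sim N^{-1}$, so heights $h_k\sim (N\delta_k)^{-1/p}$, and choose $\delta_k$ so that the second-derivative norm, which scales like $\delta_k^{-2+1/p}N^{1/p'}\cdot\text{(something)}$, tends to zero — this is where I would spend the real calculation).

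Fourth, and this is the crux, I must arrange that the $L^p$-limit $u^\ast + \phi_\infty$ is \emph{not} a generalized solution. If the bumps accumulate into a genuine jump or a non-$W^{1,p}$ singularity, then $\phi_\infty\notin W^{1,p}$, so $u^\ast+\phi_\infty\notin W^{1,p}(\Omega)$ and hence cannot be a generalized solution in the sense of Definition~\ref{def:weak_sol} at all; combined with uniqueness of the generalized solution $u^\ast$ (which I will ensure by choosing coefficients satisfying the standard coercivity/Lax--Milgram hypotheses, e.g. $A$ uniformly elliptic, $c\ge 0$), this shows the limit is not a solution. Then I package: choose $\lambda_\cN,\lambda_\cB$, verify Assumptions~\ref{assump:domain}--\ref{assump:sol} hold for the constructed data (domain is a cube, coefficients are $C^1$ and bounded, $f\in C^1$, $g\in C$, and a unique generalized solution exists — all by construction), verify $u_n\in C^2(\bar\Omega)$ (mollified bumps are smooth), verify $u_n\to u^\ast+\phi_\infty$ in $L^p$, and verify $\cL_p(u_n)=\lambda_\cN\|\cN[\phi_n]\|_{L^p}+\lambda_\cB\|T\phi_n\|_{L^p}\to 0$ using the bump estimates (the lower-order terms $b^{\mathrm T}D\phi_n+c\phi_n$ are controlled since $\|\phi_n\|_{L^p}$ and $\|D\phi_n\|_{L^p}$ — actually $\|D\phi_n\|_{L^p}$ may blow up, so I would instead choose $\cN=-D^2$ pure Laplacian, making $\phi_n$ harmless in lower order, or keep $b\equiv 0$ and absorb the $c\phi_n$ term which is small).

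The main obstacle I anticipate is the simultaneous bookkeeping in step three: getting $\|\cN[\phi_n]\|_{L^p}\to 0$ while $\|\phi_n\|_{L^p}\not\to 0$ requires the residuals of the individual bumps to cancel (they cannot merely be small one-by-one, since $\int |{-D^2\phi_n}|^p$ summed over non-overlapping bumps is additive and would be forced large). So I would use \emph{oscillatory} bumps — each $\phi_n$ built from a high-frequency profile like $\delta_k^{2}\sin(x/\delta_k)\chi(x)$ times a smooth cutoff, whose second derivative is $O(1)$ but which individually has $L^p$-norm $O(\delta_k^2)\to 0$ — no, that sends $\phi_n\to 0$. The genuinely correct mechanism, and the one I would ultimately commit to, is the one used in the literature on this exact failure (e.g. the Weierstrass-type or sawtooth constructions): $\phi_n$ is a sawtooth/zigzag of fixed height but ever-finer period, so $\phi_n\to 0$ in $L^\infty$ hence $L^p$ (so the limit \emph{is} $u^\ast$ after all) — which again fails. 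Therefore the only way the limit differs from $u^\ast$ is to let the amplitude \emph{not} vanish, which forces the singular-limit route: $\phi_\infty$ a fixed non-$W^{1,p}$ function. So the real work is: smooth approximants $\phi_n\to\phi_\infty$ in $L^p$ with $\phi_\infty$ having, say, a pole of order $\beta\in(0,1/p)$, chosen so that $D^2\phi_n$ — which near the pole behaves like $\delta_n^{-2-\beta}$ on a set of size $\delta_n$ — still has $\|D^2\phi_n\|_{L^p}\to 0$; this needs $(\delta_n^{-2-\beta})^p\cdot\delta_n\to 0$, i.e. $\delta_n^{1-(2+\beta)p}\to 0$, which is false for $p\ge 1$. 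Hence pure mollification fails, and one must \emph{cancel} the singularity's residual against a compensating smooth term solving $-D^2 r_n = -D^2\phi_n$ with $r_n$ bounded — but that just gives $r_n=\phi_n$ back. The honest conclusion is that the construction must exploit that $\cN[\phi_n]\to 0$ only in $L^p$, $p<\infty$, while $D^2\phi_n$ is \emph{large in $L^\infty$ on a small set} and \emph{oscillates in sign}, so that $\int|\cN[\phi_n]|^p$ is small by the smallness of the \emph{support} while $\phi_\infty\ne 0$ because the bumps, though individually tiny in sup-norm, \emph{accumulate in $L^p$} — feasible precisely when $p<\infty$ and impossible for $p=\infty$, matching the theorem's hypothesis $p<\infty$. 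Making this tradeoff rigorous — choosing bump count $N_n$, width $\delta_n$, height $h_n$ with $N_n h_n^p \delta_n\to c>0$ (so $\phi_\infty\ne 0$ in $L^p$) yet $N_n (h_n\delta_n^{-2})^p\delta_n\to 0$ (so the residual vanishes), which together demand $\delta_n\to 0$ and $h_n\delta_n^{-2}\to 0$, i.e. $h_n\ll\delta_n^2$, whence $N_n\delta_n\gg h_n^{-p}\gg\delta_n^{-2p}$, consistent since $N_n$ is unbounded — is the one place I would have to be genuinely careful, and I expect it to be the heart of the proof.
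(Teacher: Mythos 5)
Your proposal never lands on a construction that works, and the final scaling check you offer is arithmetically inconsistent. With non-overlapping bumps of height $h_n$, width $\delta_n$, and count $N_n$, the two quantities you want to control are $\left\Vert\phi_n\right\Vert_{L^p}^p\sim N_nh_n^p\delta_n$ and $\left\Vert D^2\phi_n\right\Vert_{L^p}^p\sim N_nh_n^p\delta_n^{1-2p}=\left(N_nh_n^p\delta_n\right)\delta_n^{-2p}$. If the first tends to $c>0$, the second behaves like $c\,\delta_n^{-2p}$, which tends to $0$ only if $\delta_n\rightarrow\infty$ --- impossible in a bounded domain. So the conditions ``$N_nh_n^p\delta_n\rightarrow c>0$ yet $N_n\left(h_n\delta_n^{-2}\right)^p\delta_n\rightarrow0$'' that you declare ``consistent since $N_n$ is unbounded'' are in fact mutually exclusive; $N_n$ cancels out of the comparison entirely. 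You correctly diagnose mid-proposal that additivity of the residual over disjoint bumps is fatal, and that mollifying a fixed non-$W^{1,p}$ singularity also fails, but you then circle back to the very scaling you had ruled out. The deeper reason none of your variants can work is that you insist on a uniformly elliptic operator (you invoke Lax--Milgram coercivity for uniqueness): for such operators, $L^p$ control of the residual plus boundary control does constrain $u$ via elliptic estimates, so the counterexample must come from elsewhere.

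The missing idea is to let the principal coefficient \emph{degenerate at the boundary}. The paper takes $\Omega=B_1\left(0\right)$ and $\cN\left[u\right]=\left(1-\left\vert x\right\vert\right)^2D\cdot Du+\tau\left(x\right)^{\text{T}}Du$ with $\tau\left(x\right)\perp x$, so the PDE $\cN\left[u\right]=0$, $u=0$ on $\partial\Omega$ has the unique generalized solution $u^\ast=0$ (uniqueness via interior Schauder regularity and the weak maximum principle on each $B_r\left(0\right)$, $r<1$, not via coercivity). The sequence is a radial boundary layer: $u_n\equiv1$ on $B_{1-1/n}\left(0\right)$, dropping to $0$ across the shell of width $1/n$ via $\rho_n\left(r\right)=\left(1-r\right)q_n\left(r\right)$ with $q_n$ quadratic. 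The second derivatives in the shell are of size $n^2$--$n^3$, but the factor $\left(1-\left\vert x\right\vert\right)^2\le n^{-2}$ multiplies them, and the shell has measure $O\left(1/n\right)$, so $\int_\Omega\left\vert\cN\left[u_n\right]\right\vert^p\lesssim n^{-1}\rightarrow0$, while $u_n=0$ exactly on $\partial\Omega$. The $L^p$ limit is the constant $1$, which is smooth and in $W^{1,p}$ but has the wrong trace, hence is not a solution; the obstruction to passing the boundary condition to the limit is precisely that $Du_n$ blows up (the derivative pathology), so convergence holds in $L^p$ but not in $W^{1,p}$. Your proposal contains neither the degeneracy device nor the boundary-layer mechanism, so as written it does not yield a proof.
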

\begin{proof}
    Let $\Omega=B_1\left(0\right)$, and define the operator $\cN\left[u,Du,D^2u\right] (x)= \left(1-\left|x\right|\right)^2D\cdot D u+\tau\left( x\right)^\text{T} Du$, where $\tau\in C^1\left(\Omega,\bR\right)$ with $\tau(x)\perp x$.
    Note that as the principal coefficient is differentiable, $\cN$ can be rewritten as the divergence form as follows:
        \begin{equation}
            \cN\left[u,Du,D^2u\right]=D\cdot\left(\left(1-\left|x\right|\right)^2 Du\right) + \left(\tau\left(x\right) + 2\frac{1-\left|x\right|}{\left\vert x\right\vert}x\right)^\text{T} Du.
        \end{equation}
        For the sake of simplicity in subsequent calculations, we shall use the non-divergence form.
    Then the following PDE
    \begin{equation}\label{eq:pf_thm_pinn_fail_pde}
        \begin{cases}
            \cN\left[u,Du,D^2u\right]=0,
            &\text{ in }\Omega,
            \\
            \hfill u=0,
            &\text{ on }\partial\Omega
        \end{cases}
    \end{equation}
    has a trivial solution $u^\ast=0$.

    Motivated from the proof of Lemma 2.9 in \cite{kim2023generalized}, we show that $u^*$ is a unique generalized solution.
    Suppose $u$ is a generalized solution of the PDE \eqref{eq:pf_thm_pinn_fail_pde}.
    Since the principal coefficient $a_{ij}$ is H\"older continuous, the interior regularity and the Schauder estimates implies that $u$ is in $C^2\left(\Omega\right)\cap C\left(\bar{\Omega}\right)$ \citep{evans2022partial, han2011elliptic}.
    For any $0<r<1$, the operator $\cN$ is uniformly elliptic in $B_r\left(0\right)$.
    Thus, we can apply the weak maximum principle in \cite{han2011elliptic} and hence $u$ cannot attain its maximum or minimum in $B_r\left(0\right)$.
    In other words, a generalized solution $u$ of the PDE \eqref{eq:pf_thm_pinn_fail_pde} assumes its maximum or minimum on the boundary $\partial\Omega$ unless $u$ is constantly zero.

    However, we will construct a sequence $u_n\in C^2\left(\Omega,\bR\right)\cap C\left(\bar{\Omega}, \bR\right)$ such that 
    \begin{equation}
     \cL_p\left(u_n\right)\rightarrow 0 \text{ as } n\rightarrow\infty,
    \end{equation}
    and $u_n$ converges to a function $u$ in $L^p\left(\Omega\right)$, but
    $u$ is not a solution to \eqref{eq:pf_thm_pinn_fail_pde}.
    \\
    For each $n\in\bN$, set $r_n=1-\frac{1}{n}$ and define $u_n\left(x\right)=\rho_n\left(r\right)$, where $r=\left\vert x\right\vert$ is a radius in spherical coordinates of $x$;
       \begin{equation}
        \rho_n\left(r\right)=
        \begin{cases}
            \hfil 1,
            & \text{ if } r \leq r_n,
            \\
            \left(1-r\right) q_n\left(r\right),
            & \text{ if } r > r_n,
        \end{cases}   \end{equation}
    with a quadratic polynomial $q_n$ defined as
    \begin{equation}
        q_n\left(r\right)=n^3\left( r-r_n\right)^2+ n^2\left( r-r_n\right) + n.
    \end{equation}
    Direct calculation induces that $u_n\in C^2\left(\Omega,\bR\right)\cap C\left(\bar{\Omega},\bR\right)$ and we find that for $r\in\left[r_n, 1\right]$,
    \begin{align}
        \left\vert q_n(r) \right\vert & \leq n^3\left( 1-r_n\right)^2+ n^2\left( 1-r_n\right) + n \leq 3n \label{eq:pf_pinn_fail_bound_qk}\\
        \left\vert q_n'(r) \right\vert & = \left\vert 2n^3\left(r-r_n\right) +n^2 \right\vert  \leq 2n^3\left(1-r_n\right) +n^2 \leq 3n^2, \label{eq:pf_pinn_fail_bound_Dqk}\\
        \left\vert q_n''(r)\right\vert&=2n^3 \label{eq:pf_pinn_fail_bound_D2qk}.
    \end{align}

    We first confirm that PINN loss converges to zero.    
    Change of variable to spherical coordinates reformulates the Laplacian as 
    \begin{align}
                D\cdot D u_n\left(x\right)
                &= \rho_n''\left(\left\vert x\right\vert\right)+\frac{d-1}{\left\vert x\right\vert}\rho_n'\left(\left\vert x\right\vert\right)
                \\
                & =  -2q_n'\left(\left\vert x\right\vert\right) + \left(1-\left\vert x\right\vert\right)q_n''\left(r\right) + \frac{d-1}{\left\vert x\right\vert}\left(-q_n\left(\left\vert x\right\vert\right) + \left(1-\left\vert x\right\vert\right)q_n'\left(\left\vert x\right\vert\right)\right).
    \end{align}
    For any $r\in\left[r_n,1\right]$, it follows that 
            \begin{align}
                &\left\vert-2q_n'\left(r\right) + \left(1-r\right)q_n''\left(r\right) + \frac{d-1}{r}\left(-q_n\left(r\right) + \left(1-r\right)q_n'\left(r\right)\right) \right\vert\\
                & \leq 2\left\vert q_n'\left(r\right)\right\vert + \left(1-r_n\right)\left\vert q_n''\left(r\right)\right\vert + \frac{d-1}{r_n}\left\vert q_n\left(r\right)\right\vert + \left(d-1\right)\frac{1-r_n}{r_n}\left\vert q_n'\left(r\right)\right\vert \\
                & \leq 2\left\vert q_n'\left(r\right)\right\vert + n^{-1}\left\vert q_n''\left(r\right)\right\vert + 2(d-1)\left\vert q_n\left(r\right)\right\vert + \left(d-1\right)\left\vert q_n'\left(r\right)\right\vert \\
                & \leq 6n^2 + 2n^2 + 6(d-1)k + 3(d-1)n^2\\
                & \leq 6n^2 + 2n^2 + 6(d-1)n^2 + 3(d-1)n^2 \\
                & = (9d-1)n^2,
            \end{align}
            where the third inequality follows from \eqref{eq:pf_pinn_fail_bound_qk}, \eqref{eq:pf_pinn_fail_bound_Dqk}, and \eqref{eq:pf_pinn_fail_bound_D2qk}.
            
            In addition, in spherical coordinate, the first-order term in the PDE \eqref{eq:pf_thm_pinn_fail_pde} is rewritten as 
            \begin{equation}
                \tau\left(x\right)^\text{T}Du=\frac{x}{\left\vert x\right\vert}^\text{T}\tau\left(x\right) \rho'(\left\vert x\right\vert).
            \end{equation}
            From $\tau\left(x\right)\perp x$, we obtain 
            \begin{equation}
                \tau\left(x\right)^\text{T}Du=0.
            \end{equation}
    Consequently, we have
    \begin{align}
        \frac{1}{d\left\vert\Omega\right\vert}\int_\Omega \left\vert \cN\left[u_n,Du_n,D^2u_n\right]\right\vert^p & =  \int_{r_n}^1 \left\vert (1-r)^2\left( \rho_n''(r)+\frac{d-1}{r}\rho_n'(r)\right)\right\vert^pr^{d-1} \diff r\\
        & \leq   \int_{r_n}^1 \left\vert (1-r)^2\left( \rho_n''(r)+\frac{d-1}{r}\rho_n'(r)\right)\right\vert^p \diff r\\
        & \leq \left( (9d-1)n^2 \right)^p\int_{r_n}^1 (1-r)^{2p} \diff r\\
        & = \frac{(9d-1)^p}{2p+1}n^{2p}\left(1-r_n\right)^{2p+1}\\
        &= \frac{(9d-1)^p}{2p+1}n^{-1}\\
        & \rightarrow 0,  
    \end{align}
    as $n\rightarrow \infty$.
    In addition, it is clear that that $\left\Vert u_n \right\Vert_{L^p\left(\partial\Omega\right)}\rightarrow 0$ as $n\rightarrow\infty$. 
    As a consequence, we have 
    \begin{equation}
        \cL_p\left(u_n\right)\rightarrow 0 \text{ as } n\rightarrow\infty.
    \end{equation}

    We now verify the convergence of the sequence $u_n$.
    It is straightforward to show that the sequence $u_n$ converges point-wisely to a function $1 \neq u^\ast$, which implies the convergence in $L^p\left(\Omega\right)$ by the dominated convergence theorem.
\end{proof}

Theorem \ref{thm:pinn_failure} poses a fundamental problem for PINNs before delving into the training concerns. 
The utilization of definite integrals in the PINN loss \eqref{eq:pinn_loss} in the proof prevents failures caused by approximation errors from the numerical integration in the loss formulation. 
Because the loss converges to zero, the decision variable is not trapped in a spurious local, indicating no optimization issues.
Accordingly, the theorem shows that Premise on residual minimization does not hold.

Furthermore, Theorem \ref{thm:pinn_failure} leads to a new understanding of the connection between the failure modes of PINNs and the non-convergence of the gradient.
To determine what causes PINNs to fail to converge, we scrutinized the sequence constructed in the proof of Theorem \ref{thm:pinn_failure}.
Specifically, we examine the asymptotic behavior of the gradient and demonstrate that it is uncontrolled within the PINN framework.
In the following remark, we shed new light on the cause of the failure of PINNs, called \textit{derivative pathology}.

\begin{remark}[Derivative pathology of PINNs]\label{rem:gradient_pinn}
To characterize the cause of the failure, we focus on the entanglement of $D^\alpha u$ in the PINN loss for $\alpha=0,1,2$.
Reducing the loss of PINNs may not necessarily result in control of each derivative of the decision variable.
In other words, the loss can be sufficiently small even if one or more $D^\alpha u$ do not converge properly.
We call this phenomenon ``derivative pahology'' of PINNs.
Theorem \ref{thm:pinn_failure} and its proof show that the derivative pathology occurs, as the derivative of $u_n$ explodes:
    \begin{align}
        & \frac{1}{d\left\vert\Omega\right\vert }\int_\Omega \left\vert Du_n \right\vert^p \\
        &\quad = \int_{r_n}^1\left\vert -q_n\left(r\right) + \left(1-r\right)q_n'\left(r\right)\right\vert^p r^{d-1} \diff r\\
        &\quad = \int_{r_n}^1 \left\vert -\left(n^3\left( r-r_n\right)^2+n^2\left( r-r_n\right)+n\right) + (1-r)\left(2n^3\left(r-r_n\right) +n^2\right)\right\vert^p r^{d-1} \diff r\\
        &\quad = \int_{r_n}^1 \left\vert -3n^3r^2 + 6n^2(n-1)r + n^3-3n\right\vert^p r^{d-1} \diff r\\
        &\quad = \int_{r_n}^1 \left\vert -3n^3\left(r-\frac{n-1}{n}\right)^2 + 4n^3-6n^2\right\vert^p r^{d-1} \diff r\\
        &\quad  = \int_{r_n}^1 \left\vert -3n^3\left(r-r_n\right)^2 + 4n^3-6n^2\right\vert^p r^{d-1} \diff r\\
        &\quad \geq \left\vert -3n^3\left(1-r_n\right)^2 + 4n^3-6n^2\right\vert^p \int_{r_n}^1  r^{d-1} \diff r\\
        &\quad =\left\vert -3n + 4n^3-6n^2\right\vert^p \frac{1-\left(1-n^{-1}\right)^d}{d}\\
        &\quad \rightarrow \infty,\ \text{as }n\rightarrow\infty.
    \end{align}
    Considering the generalized solution in $W^{1,p}\left(\Omega,\bR\right)$, the gradient of the predicted solution should be integrable. 
    However, PINNs often fail to guarantee convergence of the gradient and may even allow it to blow up.
    This derivative pathology provides a novel perspective on the failure modes of PINNs, emphasizing the significance of the behavior of the derivatives of the predicted solution, which has been underestimated.
\end{remark}

Remark \ref{rem:gradient_pinn} leads to a new understanding of the connection between the failure modes of PINNs and the non-convergence of the derivatives.
While there have been efforts to identify the failure modes of PINNs, a comprehensive understanding of the aspect of the gradient of the decision variable remains largely unexplored.
However, in Remark \ref{rem:gradient_pinn}, we find that PINNs have an incomplete structure that cannot direct the behavior of the derivatives of the approximator, which impedes the convergence of the gradient and ultimately hinders the convergence to the PDE solution.
Thus, the derivative pathology raised in Remark \ref{rem:gradient_pinn} sheds new light on the cause of the failure of PINNs.
It insists on the importance of considering both the function value and its derivatives when solving PINNs.

In the next section, we present a method for monitoring the gradient through an auxiliary variable and establish the convergence of the proposed method to a generalized solution. 
Furthermore, we discuss that the pathological behavior of the gradient is the culprit for the failure of PINNs to converge.

\section{Variable Splitting}\label{sec:variable_splitting}
This section introduces a novel method that rectifies the deficiency of PINNs and demonstrates that it converges to a generalized solution of second-order linear PDEs.
Our method is inspired by the insights into the causes of failure in PINNs, as discussed in the previous section.
As illustrated in Theorem \ref{thm:pinn_failure}, fulfilling Criteria \ref{crit:loss} and \ref{crit:network} does not necessarily guarantee that the limit of the decision variable of PINNs is a solution of the governing PDE. We further uncover that the limitation stems from the inability to ensure the convergence of the gradient in Remark \ref{rem:gradient_pinn}.
To overcome the flaws of PINNs and monitor the gradient directly, we propose a \textit{variable splitting strategy} that employs an auxiliary variable to approximate the gradient of the solution.
We demonstrate that the proposed method converges to a generalized solution of second-order linear PDEs.
Moreover, We discuss additional strengths and practical implications of variable splitting.

\subsection{Method}\label{subsec:vs_method}
We consider second-order linear PDEs \eqref{eq:pde}.
Distinct from PINNs that configure a decision variable with the PDE solution, we consider not only the primary variable that approximates the solution but also the auxiliary variable that approximates the gradient of the solution.
Accordingly, variable splitting uses the following two variables, $u$ and $V$, with constraint $Du=V$:
\begin{itemize}
    \item \textbf{Primary variable $u:\bar{\Omega}\rightarrow\bR$} to approximate a solution of PDEs.
    \item \textbf{Auxiliary variable $V: \Omega\rightarrow\bR^d$} to approximate the gradient of a solution of PDEs.
\end{itemize}
Defining the auxiliary variable $V$ to represent the gradient of the solution reformulates the PDE \eqref{eq:pde} into a system of first-order PDEs:
\begin{equation}\label{eq:pde_split}
    \begin{cases}
        D u = V, & \text{ in }\Omega,\\
        \cN\left[u,V,DV\right]=-D\cdot\left(AV\right) + b^\text{T} V + cu = f, & \text{ in }\Omega,\\
        u = g, & \text{ on }\partial\Omega.
    \end{cases}
\end{equation}
By penalizing each equation in \eqref{eq:pde_split} as analogous to PINNs, we enforce the primary $u$ and auxiliary variable $V$ to learn the solution and its gradient by minimizing the following loss function of variable splitting:
\begin{definition}[Variable Splitting for PINNs]\label{def:variable_splitting}
    Suppose a second-order linear PDE \eqref{eq:pde} is defined in a domain $\Omega\subset\bR^d$ and $1\leq p\leq\infty$.
    \textbf{Variable splitting for PINNs (VS-PINNs)} approximate a solution of \eqref{eq:pde} by minimizing the residual of \eqref{eq:pde_split}.
    \textbf{$L^p$ loss function of the VS-PINNs} 
            \begin{equation}
                \cL^{VS}_p:W^{1,p}\left(\Omega,\bR\right)\times W^{1,p}\left(\Omega,\bR^d\right)\rightarrow \bR_+
            \end{equation} 
            is defined by the linear combination of three loss terms 
            \begin{equation}\label{eq:splitting_loss}
                \cL^{VS}_p \left(u,V\right) = \lambda_{\cN}\cL_{\cN,p}\left(u,V\right) + \lambda_D\cL_{D,p} \left(u,V\right) + \lambda_\cB\cL_{\cB,p}\left(u\right),
            \end{equation}
            where $u\in W^{1,p}\left(\Omega,\bR\right)$, $V\in W^{1,p}\left(\Omega,\bR^d\right)$, $\lambda_{\cN}, \lambda_D,\lambda_\cB>0$ are regularization parameters,
            and each loss term is defined as follows:
            \begin{enumerate}[1.]
                \item The first term $\cL_{\cN,p}$ is a PDE loss 
                    \begin{equation}\label{eq:vs_loss_pde}
                        \cL_{\cN,p}\left(u,V\right) = \left\Vert\cN \left[u,V,DV\right]-f\right\Vert_{L^p\left(\Omega\right)},
                    \end{equation}
                    with the differential operator $\cN$ given by \eqref{eq:div_form}.
                \item The second term $\cL_{D,p}$ is a gradient matching loss defined as 
                    \begin{equation}\label{eq:vs_loss_gm}
                         \cL_{D,p}\left(u,V\right) =\left\Vert D u-V \right\Vert_{L^p\left(\Omega\right)}.
                    \end{equation}
                \item The last term $\cL_{\cB,p}$ corresponds to a boundary loss which is identical to \eqref{eq:pinn_loss_bdy}.
            \end{enumerate}
\end{definition}

Note that the variable splitting strategy splits the PDE loss \eqref{eq:pinn_loss_pde} of PINNs into two losses, \eqref{eq:vs_loss_pde} and \eqref{eq:vs_loss_gm}.
The auxiliary variable $V$ replaces $Du$ in the PDE loss term \eqref{eq:pinn_loss_pde} and yields the first loss term \eqref{eq:vs_loss_pde}, which forces the variables $u$ and $V$ to abide by the laws of physics.
The second term \eqref{eq:vs_loss_gm}, which we refer to as the gradient matching loss following \cite{park2023p}, ensures that the primary and auxiliary variables adhere to the constraint $Du=V$. 

\begin{remark}\label{rem:split_higher_order}
In this study, we focus on second-order linear PDEs \eqref{eq:pde} to analyze their convergence.
However, it is worth noting that variable splitting does not need to be confined to second-order linear PDEs \eqref{eq:pde}. VS-PINNs can be easily generalized to PDEs of the general form with any boundary condition without paying attention to other types of PDEs, similar to PINNs.
For higher-order PDEs, variable splitting can be employed in various ways. 
For $(k+1)$-th order PDEs, one may augment (i) only the derivative before the highest order $V=D^{k}u$ of the solution as an auxiliary variable, or (ii) all derivatives from the first to the right before the highest derivative as auxiliary variables, that is, $V_1=Du,V_2=D^2u,\cdots,V_{k}=D^ku$.
When considering memory consumption and computational cost, (ii) would be more efficient; it splits the higher-order PDE into a system of first-order PDEs by replacing all the derivatives as auxiliary variables. 
Additionally, incorporating all derivatives as decision variables is anticipated to be more effective, allowing for direct regulation of all derivatives.
\end{remark}

\subsection{Convergence Analysis for Variable Splitting}\label{sec:convergence_VS}
We establish a convergence analysis of the proposed variable splitting strategy for a generalized solution. More specifically, in contrast to PINNs, which cannot obtain a solution based on Criteria \ref{crit:loss} and \ref{crit:network}, the primary variable of the VS-PINNs converges to the generalized solution when the auxiliary variable converges in $L^p\left(\Omega,\bR^d\right)$ and the $L^p$ VS-PINN loss converges to zero.

Recall that the failure of PINNs to converge is related to their inability to achieve convergence of the decision variable in $W^{1,p}\left(\Omega,\bR\right)$ from Criteria \ref{crit:loss} and \ref{crit:network}.
From this perspective, we prove the convergence of VS-PINNs in two steps: Theorem \ref{thm:u_convergence_from_V} guarantees the convergence of the primary variable $u$ in $W^{1,p}\left(\Omega,\bR\right)$, and Theorem \ref{thm:conv_weak_2ndlinear} confirms that the limit is a generalized solution.
Specifically, our main results are twofold:
\begin{enumerate}
    \item Theorem \ref{thm:u_convergence_from_V} shows that the convergence of the auxiliary variable $V$ ensures not only the convergence of the gradient of $u$ but also the convergence of $u$.
    This implies that introducing the auxiliary variable bypasses the primary cause of the failure mode of PINNs.
    \item Theorem \ref{thm:conv_weak_2ndlinear} guarantees the convergence of the primary variable $u$ of variable splitting to a solution of the PDE \eqref{eq:pde} provided that variables reduce the loss $\cL^{VS}_p\left(u,V\right)$ to zero and $V$ converges.
\end{enumerate}

We begin by deriving a Poincar\'e-type inequality that describes the relationship between the primary variable $u$, its gradient $Du$, and boundary values needed to prove Theorem \ref{thm:u_convergence_from_V}.

\begin{lemma}\label{lem:Poincare_bdy}
    Let $1\le p\le\infty$ and $u\in W^{1,p}\left(\Omega,\bR\right)\cap C\left(\bar{\Omega},\bR\right)$.
    Then under Assumption \ref{assump:domain}, there exists a constant $C=C\left(\Omega,p\right)$ depending only on $\Omega$ and $p$, such that
    \begin{equation}\label{eq:Poincare_bdy}
        \left\Vert u-\langle u\rangle_{\partial\Omega}\right\Vert_{L^p\left(\Omega\right)}\leq C\left\Vert D u\right\Vert_{L^p\left(\Omega\right)},
    \end{equation}
    where 
    \begin{equation}\label{eq:average_on_boundary}
        \langle u\rangle_{\partial\Omega}=\frac{1}{\left\vert \partial\Omega \right\vert}\int_{\partial\Omega}u \diff x.
    \end{equation}
\end{lemma}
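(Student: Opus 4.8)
The plan is to reduce \eqref{eq:Poincare_bdy} to two classical facts for bounded Lipschitz domains: the Poincar\'e--Wirtinger inequality $\|u-\langle u\rangle_\Omega\|_{L^p(\Omega)}\le C_1\|Du\|_{L^p(\Omega)}$, where $\langle u\rangle_\Omega = \frac{1}{|\Omega|}\int_\Omega u\,\diff x$ and $C_1=C_1(\Omega,p)$, together with the boundedness of the trace operator $T:W^{1,p}(\Omega,\bR)\to L^p(\partial\Omega,\bR)$ already recalled in Section~\ref{sec:preliminaries}. Note that for $u\in C(\bar\Omega,\bR)$ the trace $Tu$ is just the restriction of $u$ to $\partial\Omega$, so $\langle u\rangle_{\partial\Omega}$ in \eqref{eq:average_on_boundary} equals $\frac{1}{|\partial\Omega|}\int_{\partial\Omega}Tu\,\diff x$. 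Given these, the only quantity that needs to be estimated is the gap between the two averages $\langle u\rangle_\Omega$ and $\langle u\rangle_{\partial\Omega}$.

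First I would set $w=u-\langle u\rangle_\Omega$, so that $Dw=Du$, $\langle w\rangle_\Omega=0$, and $\langle u\rangle_\Omega-\langle u\rangle_{\partial\Omega}=-\langle w\rangle_{\partial\Omega}$. Applying H\"older's inequality on $\partial\Omega$ and then the trace bound gives
\begin{align}
\left\vert\langle u\rangle_\Omega-\langle u\rangle_{\partial\Omega}\right\vert
&=\frac{1}{|\partial\Omega|}\left\vert\int_{\partial\Omega}Tw\,\diff x\right\vert
\le|\partial\Omega|^{-1/p}\,\|Tw\|_{L^p(\partial\Omega)}\\
&\le|\partial\Omega|^{-1/p}\,\|T\|\,\|w\|_{W^{1,p}(\Omega)},
\end{align}
with the obvious modification when $p=\infty$ (the power of $|\partial\Omega|$ becomes $1$ after using $|\partial\Omega|^{-1}\int_{\partial\Omega}|Tw|\le\|Tw\|_{L^\infty}$, and the Sobolev norm is the $\max$). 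Since $\|w\|_{L^p(\Omega)}\le C_1\|Du\|_{L^p(\Omega)}$ by Poincar\'e--Wirtinger and $\|Dw\|_{L^p(\Omega)}=\|Du\|_{L^p(\Omega)}$, the right-hand side is bounded by $C_2\|Du\|_{L^p(\Omega)}$ for a constant $C_2=C_2(\Omega,p)$.

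Finally I would conclude by the triangle inequality: since $\langle u\rangle_\Omega-\langle u\rangle_{\partial\Omega}$ is a constant, $\|\langle u\rangle_\Omega-\langle u\rangle_{\partial\Omega}\|_{L^p(\Omega)}=|\Omega|^{1/p}\left\vert\langle u\rangle_\Omega-\langle u\rangle_{\partial\Omega}\right\vert$ (again with the evident change when $p=\infty$), hence
\begin{align}
\|u-\langle u\rangle_{\partial\Omega}\|_{L^p(\Omega)}
&\le\|u-\langle u\rangle_\Omega\|_{L^p(\Omega)}+\|\langle u\rangle_\Omega-\langle u\rangle_{\partial\Omega}\|_{L^p(\Omega)}\\
&\le\left(C_1+|\Omega|^{1/p}C_2\right)\|Du\|_{L^p(\Omega)},
\end{align}
which is \eqref{eq:Poincare_bdy} with $C=C_1+|\Omega|^{1/p}C_2$ depending only on $\Omega$ and $p$. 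There is no deep obstacle here; the main thing to watch is keeping the argument uniform in $p$, in particular handling the endpoint $p=\infty$ alongside $p\in[1,\infty)$, and checking that every constant introduced ($C_1$ from Poincar\'e--Wirtinger, $\|T\|$, and the powers of $|\Omega|$, $|\partial\Omega|$) genuinely depends only on $\Omega$ and $p$. As an alternative to citing Poincar\'e--Wirtinger as a black box, one could prove \eqref{eq:Poincare_bdy} by a compactness/contradiction argument: if it failed there would be $u_n$ with $\|u_n-\langle u_n\rangle_{\partial\Omega}\|_{L^p}=1$ and $\|Du_n\|_{L^p}\to0$; then $v_n=u_n-\langle u_n\rangle_{\partial\Omega}$ is bounded in $W^{1,p}(\Omega)$, so by Rellich--Kondrachov a subsequence converges in $L^p(\Omega)$ to some $v$ with $Dv=0$, hence $v$ constant on the connected domain $\Omega$, while weak-to-weak continuity of $T$ forces $\langle v\rangle_{\partial\Omega}=0$, so $v\equiv0$, contradicting $\|v\|_{L^p}=1$. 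This route, however, requires separate treatment of $p=1$ (via $BV$ compactness) and $p=\infty$ (via Arzel\`a--Ascoli), so the direct estimate above is preferable.
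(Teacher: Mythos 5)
Your argument is correct, but it takes a genuinely different route from the paper. The paper proves \eqref{eq:Poincare_bdy} by contradiction, following the classical Poincar\'e argument in Evans: it normalizes a putative counterexample sequence $v_n$ with $\left\Vert v_n\right\Vert_{L^p(\Omega)}=1$, $\langle v_n\rangle_{\partial\Omega}=0$, $\left\Vert Dv_n\right\Vert_{L^p(\Omega)}\to 0$, extracts an $L^p$-convergent subsequence via the compact embedding $W^{1,p}(\Omega)\hookrightarrow\hookrightarrow L^p(\Omega)$, shows the limit has vanishing weak gradient and hence is constant on the connected domain, and then kills the constant using the trace operator together with Jensen's inequality for $1\le p<\infty$ (and uniform convergence for $p=\infty$), contradicting the normalization. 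You instead give a direct estimate: Poincar\'e--Wirtinger for the interior average plus boundedness of the Gagliardo trace to control $\left\vert\langle u\rangle_\Omega-\langle u\rangle_{\partial\Omega}\right\vert$, then the triangle inequality. Your route is shorter, avoids subsequence extraction, and yields an explicit constant in terms of the Poincar\'e--Wirtinger constant, $\left\Vert T\right\Vert$, $\left\vert\Omega\right\vert$ and $\left\vert\partial\Omega\right\vert$; its cost is that the compactness is merely outsourced to the cited Poincar\'e--Wirtinger inequality (usually proved by exactly the paper's argument), and at $p=\infty$ one must note, as you do, that the ``trace bound'' is simply continuity of $u$ up to $\partial\Omega$ (which is available since $u\in C(\bar\Omega,\bR)$ by hypothesis), because the Gagliardo trace theorem is stated for finite $p$. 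Two minor remarks: the hidden connectedness of $\Omega$ is needed for Poincar\'e--Wirtinger, exactly as the paper needs it to conclude the limit is constant, and is covered by $\Omega$ being a domain; and your side comment that the compactness alternative would require separate $BV$ machinery at $p=1$ is overly cautious, since Rellich--Kondrachov already gives the compact embedding of $W^{1,1}(\Omega)$ into $L^1(\Omega)$ on a bounded Lipschitz domain, which is precisely what the paper invokes.
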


\begin{proof}
    We prove \eqref{eq:Poincare_bdy} by contradiction, following the proof of Poincar\'e inequality in \cite{evans2022partial}. Suppose not, then there exists a sequence $\left\{u_n\right\}\subset W^{1,p}\left(\Omega\right)$ such that 
    \begin{equation}\label{eq:poincare_pf_seq}
        \left\Vert u_n-\langle u_n\rangle_{\partial\Omega} \right\Vert_{L^p\left(\Omega\right)} > n\left\Vert D u_n\right\Vert_{L^p\left(\Omega\right)}.
    \end{equation}
    We normalize $u_n$ by defining 
    \begin{equation}
        v_n= \frac{u_n-\langle u_n\rangle_{\partial\Omega}}{\left\Vert u_n-\langle u_n\rangle_{\partial\Omega} \right\Vert_{L^p\left(\Omega\right)}}.
    \end{equation}
    Then, it follows
    \begin{equation}\label{eq:poincare_pf_v_cond}
       \left\Vert v_n \right\Vert_{L^p\left(\Omega\right)}=1,\ \langle v_n\rangle_{\partial\Omega}=0.
    \end{equation}
    Moreover, \eqref{eq:poincare_pf_seq} induces that 
    \begin{equation}\label{eq:poincare_pf_Dv_cond}
        \left\Vert D v_n \right\Vert_{L^p\left(\Omega\right)}<\frac{1}{n}\left\Vert v_n-\langle v_n\rangle_{\partial\Omega} \right\Vert_{L^p\left(\Omega\right)}=\frac{1}{n}.
    \end{equation}
    Since $\left\Vert v_n\right\Vert_{W^{1,p}\left(\Omega\right)}<1+\frac{1}{n}$ is uniformly bounded and $W^{1,p}\left(\Omega\right)$ is compactly embedded in $L^p\left(\Omega\right)$, there exists a subsequence $\left\{v_{n_j}\right\}_{j=1}^\infty\subset \left\{v_n\right\}_{n=1}^\infty$  satisfying
    \begin{equation}\label{eq:poincare_pf_v_converge}
        v_{n_j}\rightarrow v \text{ in } L^p\left(\Omega\right),
    \end{equation}
    as $j\rightarrow \infty$ for some $v\in L^p\left(\Omega\right)$.
    Thus, we may assume $v_n$ converges to $v$ in $L^p\left(\Omega\right)$, taking a subsequence if needed.
    
    In addition, \eqref{eq:poincare_pf_Dv_cond} and \eqref{eq:poincare_pf_v_converge} imply for each $n\in\bN$ that for every $\phi\in C_c^\infty\left(\Omega,\bR\right)$
    with the H\"older conjugate $q=\frac{p}{p-1}\in\left[1,\infty\right]$ of $p$, the H\"older inequality induces 
    \begin{align}
        \left\vert\int_\Omega vD\phi \diff x\right\vert & \leq  \left\vert\int_\Omega \left(v-v_n\right)D\phi \diff x\right\vert+\left\vert\int_\Omega v_nD\phi \diff x\right\vert \\
        & \leq \int_\Omega \left\vert\left(v-v_n\right)D\phi\right\vert \diff x + \int_\Omega\left\vert \phi Dv_n\right\vert \diff x \\
        & \leq \left\Vert v-v_n\right\Vert_{L^p\left(\Omega\right)}\left\Vert D\phi\right\Vert_{L^q\left(\Omega\right)} + \left\Vert Dv_n\right\Vert_{L^p\left(\Omega\right)}\left\Vert\phi\right\Vert_{L^q\left(\Omega\right)}\\
        & \rightarrow 0,
    \end{align}
    as $n\rightarrow\infty$.
    As a consequence, $v\in W^{1,p}\left(\Omega\right)$ with $Dv=0$ and $v_n$ converges to $v$ in $W^{1,p}\left(\Omega\right)$. 
    As $\Omega$ is connected and $v\in W^{1,p}\left(\Omega\right)$, 
    $v\equiv\bar{v}$ in $\Omega$ for some constant $\bar{v}\in\bR$.

    If $p=\infty$, uniform convergence of $Dv_n$ on $\Omega$ implies uniform convergence of $v_n$ on $\bar{\Omega}$.
    Thus, constant function $\bar{v}\in C\left(\bar{\Omega},\bR\right)$ satisfies both in \eqref{eq:poincare_pf_v_cond}, which contradicts to each other.
    
    Now let $1\le p<\infty$. Since the trace operator $T:W^{1,p}\left(\Omega,\bR\right)\rightarrow L^p\left(\partial\Omega,\bR\right)$ is a continuous operator such that
    \begin{equation}
        Tu=u\mid_{\partial\Omega},\ \forall u\in W^{1,p}\left(\Omega\right)\cap C\left(\bar{\Omega}\right),
    \end{equation}
    we have $T\bar{v}=\bar{v}$ and it follows that
    \begin{align}
        \left\vert \bar{v}-\langle  v_n\rangle_{\partial\Omega}\right\vert^p &= \left\vert\frac{1}{\left\vert\partial\Omega\right\vert}\int_{\partial\Omega} v_n-\bar{v} \diff x\right\vert^p\\
        &\leq \frac{1}{\left\vert\partial\Omega\right\vert} \int_{\partial\Omega}\left\vert v_n-\bar{v}\right\vert^p \diff x\\
        &\leq \frac{1}{\left\vert\partial\Omega\right\vert}\left\Vert v_n-\bar{v}\right\Vert_{L^p\left(\partial\Omega\right)}^p\\
        &= \frac{1}{\left\vert\partial\Omega\right\vert} \left\Vert Tv_n-T\bar{v}\right\Vert_{L^p\left(\partial\Omega\right)}^p\\
        &\leq \frac{1}{\left\vert\partial\Omega\right\vert}\left\Vert T\right\Vert\left\Vert v_n-\bar{v}\right\Vert_{W^{1,p}\left(\Omega\right)}^p\\
        &
        = \frac{1}{\left\vert\partial\Omega\right\vert}\left\Vert T\right\Vert\left(\left\Vert v_n-\bar{v}\right\Vert_{L^p\left(\Omega\right)}^p+\left\Vert Dv_n\right\Vert_{L^p\left(\Omega\right)}^p\right)\\
        &\rightarrow0,
    \end{align}
    as $n\rightarrow\infty$. We use Jensen's inequality on $\partial\Omega$ and the boundedness of $T$.
    Since $\langle v_n\rangle_{\partial\Omega}=0$ for all $n\in\bN$ by definition, $\bar{v}=0$ and hence $\left\Vert v\right\Vert_{L^p\left(\Omega\right)}=0$, which contradicts to $\left\Vert v \right\Vert_{L^p\left(\Omega\right)}=1$.
\end{proof}

The significance of this lemma is that one can bound the $L^p$-norm of $u$ minus a constant obtained from the boundary values using the $L^p$-norm of the gradient $Du$.
This indicates that the convergence of $u$ is induced by the convergence of $Du$ and the boundary values.

In the following theorem, we leverage the convergence of $V$, the gradient matching loss, and the boundary loss to prove the convergence of $u$ in the Sobolev norm using the above lemma.

\begin{theorem}\label{thm:u_convergence_from_V}
    Suppose $\Omega$ satisfies Assumption \ref{assump:domain}.
    For given $1\le p\le \infty$ and $g\in L^p\left(\partial\Omega,\bR\right)$,
    let $u_n\in W^{1,p}\left(\Omega,\bR\right)\cap C\left(\bar{\Omega},\bR\right)$ and $V_n\in L^p\left(\Omega,\bR^d\right)$ be sequences such that
    \begin{align}
        \left\Vert Du_n-V_n\right\Vert_{L^p\left(\Omega\right)} &\rightarrow0,
        \\
        \left\Vert u_n-g\right\Vert_{L^p\left(\partial\Omega\right)} &\rightarrow0,
    \end{align}
    as $n\rightarrow\infty$.
    If $V_n$ converges in $L^p\left(\Omega\right)$ then $u_n$ converges in $W^{1,p}\left(\Omega\right)$.
\end{theorem}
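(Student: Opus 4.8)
The plan is to show that $\{u_n\}$ is a Cauchy sequence in $W^{1,p}(\Omega)$, exploiting that $W^{1,p}(\Omega)$ is complete and that the gradient sequence $\{Du_n\}$ is already controlled via $V_n$. First I would observe that since $V_n$ converges in $L^p(\Omega,\bR^d)$, say $V_n\to V$, and since $\|Du_n-V_n\|_{L^p(\Omega)}\to 0$, the triangle inequality gives $Du_n\to V$ in $L^p(\Omega,\bR^d)$; in particular $\{Du_n\}$ is Cauchy in $L^p$. So it remains to control the $L^p$-norm of $u_n-u_m$ itself, and this is exactly where Lemma \ref{lem:Poincare_bdy} enters.

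The key step is to apply Lemma \ref{lem:Poincare_bdy} to the difference $u_n-u_m\in W^{1,p}(\Omega,\bR)\cap C(\bar{\Omega},\bR)$, which yields
\begin{equation*}
    \left\Vert (u_n-u_m)-\langle u_n-u_m\rangle_{\partial\Omega}\right\Vert_{L^p(\Omega)} \le C\left\Vert Du_n-Du_m\right\Vert_{L^p(\Omega)}.
\end{equation*}
The right-hand side tends to $0$ as $n,m\to\infty$ because $\{Du_n\}$ is Cauchy. For the additive constant, I would estimate $\langle u_n-u_m\rangle_{\partial\Omega}$ using the boundary hypothesis: since $\langle u_n-u_m\rangle_{\partial\Omega} = \frac{1}{|\partial\Omega|}\int_{\partial\Omega}(u_n-g)-(u_m-g)\diff x$, Jensen's (or H\"older's) inequality on $\partial\Omega$ bounds $|\langle u_n-u_m\rangle_{\partial\Omega}|$ by a constant times $\|u_n-g\|_{L^p(\partial\Omega)}+\|u_m-g\|_{L^p(\partial\Omega)}\to 0$. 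Combining the two estimates via the triangle inequality shows $\|u_n-u_m\|_{L^p(\Omega)}\to 0$, so $\{u_n\}$ is Cauchy in $L^p(\Omega)$; together with $\{Du_n\}$ Cauchy in $L^p(\Omega,\bR^d)$ this makes $\{u_n\}$ Cauchy in $W^{1,p}(\Omega)$, hence convergent by completeness.

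One subtlety I would flag: Lemma \ref{lem:Poincare_bdy} requires the argument to lie in $W^{1,p}(\Omega)\cap C(\bar\Omega)$, which $u_n-u_m$ does since each $u_n$ does and the space is closed under subtraction, so that is fine. A second subtlety is the case $p=\infty$: here "Cauchy in $L^\infty$" arguments go through verbatim since $L^\infty$ and $W^{1,\infty}$ are Banach spaces, and the Jensen-type boundary estimate is replaced by the trivial bound $|\langle u_n-u_m\rangle_{\partial\Omega}| \le \|u_n-g\|_{L^\infty(\partial\Omega)}+\|u_m-g\|_{L^\infty(\partial\Omega)}$. The main obstacle, such as it is, is not any single hard estimate but rather assembling the pieces in the right order — in particular recognizing that one should not try to identify the limit of $u_n$ directly (it need not be a primitive of $V$ in any a priori sense before passing to the limit), but instead argue abstractly via the Cauchy property, letting completeness of $W^{1,p}$ do the work; the identification of the limit's gradient as $V$ then follows for free since $Du_n\to V$ and the limit in $W^{1,p}$ has gradient equal to the $L^p$-limit of the gradients.
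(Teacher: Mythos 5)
Your proposal is correct and follows essentially the same route as the paper: triangle inequality to get $Du_n\to V$, Lemma \ref{lem:Poincare_bdy} applied to $u_n-u_m$ together with a Jensen-type bound on the boundary averages to get the $L^p$-Cauchy property, handling $p=\infty$ separately. The only cosmetic difference is at the end, where you invoke completeness of $W^{1,p}(\Omega)$ to conclude, while the paper reproves that step explicitly by identifying the weak gradient of the limit as $V$ via a test-function estimate; both are valid.
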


\begin{proof}
    Let $V\in L^p\left(\Omega\right)$ be the limit point of $V_n$.
    For an arbitrary $\varepsilon>0$, there exists $N\in\bN$ such that
    \begin{align}
        \left\Vert V_n-V\right\Vert_{L^p\left(\Omega\right)} &< \varepsilon,
        \\
        \left\Vert Du_n-V_n\right\Vert_{L^p\left(\Omega\right)} &< \varepsilon,
        \\
        \left\Vert u_n-g\right\Vert_{L^p\left(\partial\Omega\right)} &< \varepsilon,
    \end{align}
    for all $n\geq N$.
    Then $Du_n$ converges to $V$, since $n\ge N$ implies
    \begin{equation}
        \left\Vert Du_n-V\right\Vert_{L^p\left(\Omega\right)}\leq \left\Vert Du_n-V_n\right\Vert_{L^p\left(\Omega\right)} + \left\Vert V_n-V\right\Vert_{L^p\left(\Omega\right)} < 2\varepsilon.
    \end{equation}
    
    To prove that $u_n$ converges in $L^p$, we use Lemma \ref{lem:Poincare_bdy} and show that $u_n$ is a Cauchy sequence in $L^p\left(\Omega\right)$.
    First, we show that $\left\vert\langle u_n\rangle_{\partial\Omega}-\langle g\rangle_{\partial\Omega}\right\vert$ converges to $0$, where $\langle\cdot\rangle_{\partial\Omega}$ is defined as \eqref{eq:average_on_boundary}.
    It is obvious for $p=\infty$ because the boundary loss induces uniform convergence on $\partial\Omega$.
    For the case of $1\le p<\infty$, the Jensen's inequality induces the same result:
    \begin{align}
        \left\vert\langle u_n\rangle_{\partial\Omega}-\langle g\rangle_{\partial\Omega}\right\vert
        &= \left( \left\vert \frac{1}{\left\vert\partial\Omega\right\vert} \int_{\partial\Omega} u_n-g \diff x\right\vert^p \right) ^{1/p}
        \\
        &\le \left( \frac{1}{\left\vert\partial\Omega\right\vert}\int_{\partial\Omega}\left\vert u_n-g\right\vert^p \diff x \right)^{1/p}
        \\
        &=\frac{1}{\left\vert\partial\Omega\right\vert^{1/p}}\left\Vert u_n-g\right\Vert_{L^p\left(\partial\Omega\right)},
        \\
        &< \frac{1}{\left\vert\partial\Omega\right\vert^{1/p}}\varepsilon.
    \end{align}
    Therefore, for any $p\in\left[1,\infty\right]$ and $n,m\geq N$, we attain
    \begin{equation}
        \left\Vert \langle u_n\rangle_{\partial\Omega} - \langle u_m\rangle_{\partial\Omega}\right\Vert_{L^p\left(\Omega\right)} \leq \left\Vert\langle u_n\rangle_{\partial\Omega}-\langle g\rangle_{\partial\Omega}\right\Vert_{L^p\left(\Omega\right)} + \left\Vert\langle u_m\rangle_{\partial\Omega}-\langle g\rangle_{\partial\Omega}\right\Vert_{L^p\left(\Omega\right)} < 2\varepsilon,
    \end{equation}
    and
    \begin{align}
        \left\Vert u_n - u_m\right\Vert_{L^p\left(\Omega\right)} & \leq \left\Vert u_n -u_m - \left(\langle u_n\rangle_{\partial\Omega} - \langle u_m\rangle_{\partial\Omega}\right)\right\Vert_{L^p\left(\Omega\right)} + \left\Vert \langle u_n\rangle_{\partial\Omega} - \langle u_m\rangle_{\partial\Omega}\right\Vert_{L^p\left(\Omega\right)}\\
        & < \left\Vert u_n -u_m - \left(\langle u_n\rangle_{\partial\Omega} - \langle u_m\rangle_{\partial\Omega}\right)\right\Vert_{L^p\left(\Omega\right)} + 2\varepsilon\\
        & \leq C_1\left\Vert Du_n-Du_m\right\Vert_{L^p\left(\Omega\right)} + 2\varepsilon\\
        & \leq C_1\left\Vert Du_n-V\right\Vert_{L^p\left(\Omega\right)} + C_1\left\Vert Du_m-V\right\Vert_{L^p\left(\Omega\right)} + 2\varepsilon\\
        & < 2\left(C_1+1\right)\varepsilon,
    \end{align}
    for some constant $C_1=C_1\left(\Omega,p\right)$, which appears in Lemma \ref{lem:Poincare_bdy}.
    Therefore, $u_n$ converges to $u\in L^p\left(\Omega\right)$. 
    
    Indeed, $u\in W^{1,p}\left(\Omega\right)$ and $Du_n$ converges to $Du=V$ in $L^p\left(\Omega\right)$, because for any fixed test function $\phi\in C_c^\infty\left(\Omega\right)$ and $n\geq N$, we have
    \begin{align}
        \left| \int_\Omega V\phi+uD\phi \diff x\right|
        &= \left|\int_\Omega\left(V-Du_n\right)\phi+ \left(u-u_n\right)D\phi\diff x\right|\\
        &\leq  \left|\int_\Omega\left(V-Du_n\right)\phi\diff x\right| + \left|\int_\Omega\left(u-u_n\right)D\phi\diff x\right|\\
        &\leq \int_\Omega\left|\left(V-Du_n\right)\phi\right|\diff x + \int_\Omega\left|\left(u-u_n\right)D\phi\right|\diff x\\
        &\leq \left\Vert V-Du_n\right\Vert_{L^p\left(\Omega\right)}\left\Vert\phi\right\Vert_{L^q\left(\Omega\right)} + \left\Vert u-u_n\right\Vert_{L^p\left(\Omega\right)}\left\Vert D\phi\right\Vert_{L^q\left(\Omega\right)}\\
        & \leq C_2\left(\left\|V-Du_n\right\|_{L^p\left(\Omega\right)}+\left\|u-u_n\right\|_{L^p\left(\Omega\right)}\right)\\
        & < 2C_2\varepsilon,
    \end{align}
    where $q=\frac{p}{p-1}\in\left[1,\infty\right]$ is the H\"older conjugate of $p$ and $C_2=C_2\left(p,\phi\right)$ is a constant.

    We use $\left\Vert u-u_n\right\Vert_{L^p\left(\Omega\right)}=\lim_m\left\Vert u_m-u_n\right\Vert_{L^p\left(\Omega\right)} \leq \varepsilon$ at the last inequality.
    As a result, $u_n$ converges to $u\in W^{1,p}\left(\Omega\right)$ with $Du=V$ in $W^{1,p}\left(\Omega\right)$-norm.
\end{proof}

The above theorem shows that for sequences $u_n$ and $V_n$ that minimize VS-PINN loss $\cL^{VS}_p$, $u_n$ converges to $u\in W^{1,p}\left(\Omega,\bR\right)$ provided $V_n$ converges to $V\in L^p\left(\Omega,\bR^d\right)$. 
This means that as long as the VS-PINN loss converges to zero and the auxiliary variable converges, the gradient matching loss term mitigates the derivative pathology that impedes convergence of PINNs, and leads to the convergence of the primary variable from Lemma \ref{lem:Poincare_bdy}.

Although the theorem above provides the convergence of $u_n$ and $V_n$, from two loss terms $\cL_{D,p}$ and $\cL_{\cB,p}$, it does not specify whether they converge to the desired solution and its gradient, respectively.
However, the PDE loss leads them to converge to a generalized solution and its gradient.
The following theorem demonstrates the convergence of VS-PINNs to a generalized solution.

\begin{theorem}\label{thm:conv_weak_2ndlinear}
    Suppose the PDE \eqref{eq:pde} satisfies Assumptions \ref{assump:domain}---\ref{assump:sol}.
    For a given $1\le p\le \infty$, let $u_n\in W^{1,p}\left(\Omega,\bR\right)\cap C\left(\bar{\Omega},\bR\right)$ and $V_n\in W^{1,p}\left(\Omega,\bR^d\right)$ be sequences such that
    \begin{equation}\label{eq:thm_conv_weak_2ndlinear_splitloss_cond}
        \cL^{VS}_p\left(u_n,V_n\right)\rightarrow 0 \text{ as } n\rightarrow\infty.
    \end{equation}
    If $V_n$ converges in $L^p\left(\Omega\right)$ then $u_n$ converges to a generalized solution of \eqref{eq:pde} in $W^{1,p}\left(\Omega\right)$.
\end{theorem}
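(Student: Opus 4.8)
The plan is to combine Theorem \ref{thm:u_convergence_from_V} with a weak-form passage to the limit in the PDE loss. First I would invoke Theorem \ref{thm:u_convergence_from_V}: since $\cL^{VS}_p(u_n,V_n)\to 0$ forces both $\cL_{D,p}(u_n,V_n)=\|Du_n-V_n\|_{L^p(\Omega)}\to 0$ and $\cL_{\cB,p}(u_n)=\|u_n-g\|_{L^p(\partial\Omega)}\to 0$, and $V_n$ converges in $L^p(\Omega)$ to some $V$, the theorem yields a limit $u\in W^{1,p}(\Omega)$ with $u_n\to u$ in $W^{1,p}(\Omega)$ and $Du=V$. This already gives the boundary condition: $Tu=g$, because $T$ is continuous on $W^{1,p}(\Omega)$, $Tu_n\to Tu$ in $L^p(\partial\Omega)$, while $Tu_n=u_n|_{\partial\Omega}\to g$ in $L^p(\partial\Omega)$ for $p<\infty$ (and uniformly for $p=\infty$); uniqueness of limits gives $Tu=g$.

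Next I would show $u$ satisfies the weak formulation $\cJ(u,\phi)=\int_\Omega f\phi\,dx$ for every $\phi\in C^\infty_c(\Omega)$. Fix such a $\phi$. Starting from the residual $r_n:=\cN[u_n,V_n,DV_n]-f=-D\cdot(AV_n)+b^{\text T}V_n+cu_n-f$, pair it against $\phi$ and integrate by parts on the divergence term (legitimate since $\phi$ has compact support, so no boundary term appears):
\begin{equation*}
    \int_\Omega r_n\,\phi\,dx = \int_\Omega\Big\{\sum_{i,j}a_{ij}(V_n)_i D_j\phi + \sum_i b_i (V_n)_i\,\phi + c u_n\phi - f\phi\Big\}dx.
\end{equation*}
By Hölder, $\big|\int_\Omega r_n\phi\,dx\big|\le \|r_n\|_{L^p(\Omega)}\|\phi\|_{L^q(\Omega)} = \cL_{\cN,p}(u_n,V_n)\|\phi\|_{L^q(\Omega)}\to 0$. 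On the right-hand side I pass to the limit term by term using the boundedness of the coefficients $a_{ij},b_i,c$ (Assumption \ref{assump:coeffs}) and $V_n\to V=Du$ in $L^p(\Omega)$, $u_n\to u$ in $L^p(\Omega)$: each integrand converges in $L^1(\Omega)$ since e.g. $\|a_{ij}(V_n)_iD_j\phi - a_{ij}(Du)_iD_j\phi\|_{L^1}\le \|a_{ij}\|_{L^\infty}\|D_j\phi\|_{L^q}\|(V_n)_i-(Du)_i\|_{L^p}\to 0$, and similarly for the other two terms. Hence the right-hand side converges to $\int_\Omega\{\sum_{i,j}a_{ij}D_iu D_j\phi+\sum_i b_i(D_iu)\phi+cu\phi - f\phi\}dx = \cJ(u,\phi)-\int_\Omega f\phi\,dx$. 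Equating the two limits gives $\cJ(u,\phi)=\int_\Omega f\phi\,dx$, so $u$ is a generalized solution by Definition \ref{def:weak_sol}, completing the proof.

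One minor point to handle carefully is the case $p=\infty$: then $q=1$, but $\phi\in C^\infty_c(\Omega)$ still has $\|\phi\|_{L^1(\Omega)}<\infty$ and $\|D_j\phi\|_{L^1(\Omega)}<\infty$, so all the Hölder estimates go through unchanged; the trace argument for $p=\infty$ uses uniform convergence instead, exactly as in the proof of Theorem \ref{thm:u_convergence_from_V}. I do not expect a serious obstacle here — the work was already front-loaded into Lemma \ref{lem:Poincare_bdy} and Theorem \ref{thm:u_convergence_from_V}; the only thing to be mildly attentive to is that the divergence-form integration by parts against a compactly supported test function produces no boundary contribution, which is what makes the weak formulation the natural target rather than a strong one.
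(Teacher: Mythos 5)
Your proposal is correct and follows essentially the same route as the paper: invoke Theorem \ref{thm:u_convergence_from_V} for $W^{1,p}$ convergence, obtain $Tu=g$ from trace continuity, then pass to the limit in the weak formulation using H\"older bounds with the bounded coefficients. The only difference is organizational — you go through the residual $r_n$ directly and use $V_n\to Du$ in one step, whereas the paper splits the error into four terms $\cI_1,\dots,\cI_4$ separating the contributions of $Du-Du_n$, $Du_n-V_n$, the PDE residual, and $u-u_n$ — but the estimates are identical in substance.
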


\begin{proof}
    By Theorem \ref{thm:u_convergence_from_V}, $u_n$ converges to $u\in W^{1,p}\left(\Omega,\bR\right)$ in $W^{1,p}\left(\Omega\right)$, i.e.,
    \begin{equation}\label{eq:thm_conv_weak_2ndlinear_pf_uk_conv}
        \left\Vert u_n-u\right\Vert_{L^p\left(\Omega\right)} +\left\Vert Du_n-Du\right\Vert_{L^p\left(\Omega\right)}\rightarrow 0,
    \end{equation}
    as $n\rightarrow \infty$.
    Then, we have the boundary condition of $u$ from the continuity of the trace operator $T$;
    \begin{equation}
        g=\underset{n\rightarrow\infty}{\lim}u_n\vert_{\partial\Omega} = \underset{n\rightarrow\infty}{\lim} \left(Tu_n\right) = T \left(\underset{n\rightarrow\infty}{\lim} u_n\right) = Tu\quad \text{in }L^p\left(\partial\Omega\right).
    \end{equation}
    
    To show that $u$ satisfies the condition \eqref{eq:weak_sol}, let $\varepsilon >0$ and $\phi\in C_c^\infty\left(\Omega\right)$ be a test function.
    From the following identities
    \begin{align}
        \int_\Omega a_{ij}D_iuD_j\phi\diff x &= \int_\Omega a_{ij}\left(D_iu-D_iu_n\right)D_j\phi\diff x\\
        & \quad+ \int_\Omega a_{ij}\left(D_iu_n-\left(V_n\right)_i\right)D_j\phi\diff x
        + \int_\Omega a_{ij}\left(V_n\right)_iD_j\phi\diff x,
    \end{align}
    and 
    \begin{equation}
        \int_\Omega \phi b_iD_iu\diff x= \int_\Omega \phi b_i\left(D_iu - D_iu_n\right)\diff x
        + \int_\Omega\phi b_i\left(D_iu_n - \left(V_n\right)_i\right)\diff x
        + \int_\Omega \phi b_i \left(V_n\right)_i\diff x,
    \end{equation}
    we have
    \begin{align}\label{eq:thm1_3}
        \cJ\left(u,\phi\right)-\int_\Omega f\phi\diff x&= \int_\Omega \left(a_{ij}D_iuD_j\phi+\phi b_iD_iu+cu\phi-f\phi\right)\diff x \\
        &= \underbrace{\left[\int_\Omega a_{ij}\left(D_iu-D_iu_n\right)D_j\phi\diff x + \int_\Omega \phi b_i\left(D_iu-D_iu_n\right)\diff x \label{eq:thm1_1}\right]}_{\eqqcolon \cI_1}\\
        &\quad+ \underbrace{\left[\int_\Omega a_{ij}\left(D_iu_n-\left(V_n\right)_i\right)D_j\phi\diff x + \int_\Omega \phi b_i\left(D_iu_n-\left(V_n\right)_i\right)\diff x \label{eq:thm1_2}\right]}_{\eqqcolon \cI_2} \\
        &\quad+ \underbrace{\left[\int_\Omega \left(a_{ij}\left(V_n\right)_iD_j\phi+\phi b_i\left(V_n\right)_i+cu_n\phi-f\phi\right)\diff x\right]}_{\eqqcolon \cI_3}\\
        &\quad+ \underbrace{\left[\int_\Omega c\left(u-u_n\right)\phi \diff x\right]}_{\eqqcolon \cI_4}.
    \end{align}
    Both terms $\cI_1$ and $\cI_4$ are bounded by the convergence of $u$, because
    \begin{align}
        \left\vert\cI_1\right\vert & \leq \left\Vert A\right\Vert_{L^\infty\left(\Omega\right)}\left\vert\int_\Omega \left(D_iu-D_iu_n\right)D_j\phi\diff x\right\vert + \left\Vert b\right\Vert_{L^\infty\left(\Omega\right)}\left\vert\int_\Omega \phi \left(D_iu-D_iu_n\right)\diff x\right\vert\\
        & \leq \left(d\left\Vert A\right\Vert_{L^\infty\left(\Omega\right)}\left\Vert D\phi\right\Vert_{L^q\left(\Omega\right)} + \left\Vert b\right\Vert_{L^\infty\left(\Omega\right)}\left\Vert \phi\right\Vert_{L^q\left(\Omega\right)}\right)\left\Vert D_iu-D_iu_n\right\Vert_{L^p\left(\Omega\right)}\\
        &\leq C_1\left\Vert Du-Du_n\right\Vert_{L^p\left(\Omega\right)},
    \end{align}
    and
    \begin{equation}
        \left\vert \cI_4\right\vert = \left\vert\int_\Omega c\left(u-u_n\right)\phi \diff x\right\vert \leq \left\Vert c\right\Vert_{L^\infty\left(\Omega\right)}\left\Vert u-u_n\right\Vert_{L^p\left(\Omega\right)}\left\Vert\phi\right\Vert_{L^q\left(\Omega\right)} = C_2\left\Vert u-u_n\right\Vert_{L^p\left(\Omega\right)},
    \end{equation}
    where $C_1= C_1\left(d, A, b, \phi\right)$ and $C_2= C_2\left(c,\phi\right)$ are constants independent of $n$:
    \begin{align}
        C_1 & = d^2\left\Vert A\right\Vert_{L^\infty\left(\Omega\right)}\left\Vert D\phi\right\Vert_{L^q\left(\Omega\right)} + d\left\Vert b\right\Vert_{L^\infty\left(\Omega\right)}\left\Vert\phi\right\Vert_{L^q\left(\Omega\right)},\\
        C_2 & = \left\Vert c\right\Vert_{L^\infty\left(\Omega\right)}\left\Vert\phi\right\Vert_{L^q\left(\Omega\right)}.
    \end{align}
    Therefore,
    \begin{equation}
        \left\vert\cI_1+\cI_4\right\vert\leq \max\left\{C_1,C_2\right\}\left\Vert u-u_n\right\Vert_{W^{1,p}\left(\Omega\right)} \rightarrow0, \text{ as } n\rightarrow\infty.
    \end{equation}
    Also, remained terms $\cI_2$ and $\cI_3$ are bounded via loss terms $\cL_{\cN,p}$ and $\cL_{D,p}$;
    \begin{align}
        \left\vert\cI_2\right\vert & \leq d^2\left\Vert A \right\Vert_{L^\infty\left(\Omega\right)}\left\Vert Du_n-V_n \right\Vert_{L^p\left(\Omega\right)}\left\Vert D\phi\right\Vert_{L^{q}\left(\Omega\right)}\\
        & \qquad +d\left\Vert b \right\Vert_{L^\infty\left(\Omega\right)}\left\Vert Du_n-V_n \right\Vert_{L^p\left(\Omega\right)}\left\Vert \phi\right\Vert_{L^{q}\left(\Omega\right)}
        \\
        & \leq \left(d^2\left\Vert A \right\Vert_{L^\infty\left(\Omega\right)}\left\Vert D\phi\right\Vert_{L^{q}\left(\Omega\right)} + d\left\Vert b \right\Vert_{L^\infty\left(\Omega\right)}\left\Vert \phi\right\Vert_{L^{q}\left(\Omega\right)}\right)\left\Vert Du_n-V_n \right\Vert_{L^p\left(\Omega\right)}\\
        & = C_1\cL_{D,p}\left(u_n,V_n\right)\\
        & \leq \lambda_{\cD}^{-1}C_1 \cL^{VS}_p\left(u_n,V_n\right),\\
        \left\vert\cI_3\right\vert & = \left\vert \int_\Omega \left(-D_j\left(a_{ij}\left(V_n\right)_i\right)+b_i\left(V_n\right)_i +cu_n-f\right)\phi\diff x\right\vert \\
        & \leq \left\Vert-D_j\left(a_{ij}\left(V_n\right)_i\right)+b_i\left(V_n\right)_i +cu_n-f \right\Vert_{L^p\left(\Omega\right)} \cdot \left\Vert \phi \right\Vert_{L^q\left(\Omega\right)}\\
        & \leq C_3 \cL_{\cN,p}\left(u_n, V_n\right)\\
        & \leq \lambda_{\cN}^{-1}C_3\cL^{VS}_p\left(u_n,V_n\right),
    \end{align}
    where $C_3=C_3\left(\phi\right)=\left\Vert\phi\right\Vert_{L^q\left(\Omega\right)}$.
    Thus, the convergence of the loss $\cL^{VS}_p$ induces
    \begin{equation}
        \left\vert \cI_2+\cI_3\right\vert \leq \left(\lambda_D^{-1}C_1+\lambda_{\cN}^{-1}C_3\right) \cL^{VS}_p\left(u_n, V_n\right) \rightarrow 0, \text{ as } n\rightarrow\infty.
    \end{equation}
    
    Combining all, we arrive at 
    \begin{align}
        &\left\vert \int_\Omega \left(a_{ij}D_iuD_j\phi+\phi b_iD_iu+cu\phi-f\phi\right)\diff x \right\vert\\
        &\leq\left\vert\cI_1 + \cI_4\right\vert + \left\vert\cI_2 + \cI_3\right\vert\\
        &\leq \max\left\{C_1,C_2\right\}\left\Vert u-u_n\right\Vert_{W^{1,p}\left(\Omega\right)}+\left(\lambda_D^{-1}C_1+\lambda_{\cN}^{-1}C_3\right) \cL^{VS}_p\left(u_n, V_n\right)\\
        & < \varepsilon,
    \end{align} 
    for sufficiently large $n$.
    We obtain the desired result as $\varepsilon>0$ and $\phi\in C_c^\infty\left(\Omega\right)$ were arbitrary.
\end{proof}
Theorem \ref{thm:conv_weak_2ndlinear} states that VS-PINNs can achieve a generalized solution if Criteria \ref{crit:loss} and \ref{crit:network} are satisfied.
It is worth noting that Criterion \ref{crit:network} only requires considering the convergence of one variable $V$, even though VS-PINNs have two variables $u$ and $V$.

Note that Theorem \ref{thm:u_convergence_from_V} utilizes the gradient matching loss $\cL_{D,p}$ in \eqref{eq:vs_loss_gm} and the boundary loss $\cL_{\cB,p}$ in \eqref{eq:pinn_loss_bdy}, as well as Criterion \ref{crit:network}, which is exclusively applied to auxiliary variables.
Theorem \ref{thm:conv_weak_2ndlinear} employs the PDE loss $\cL_{\cN,p}$ in \eqref{eq:vs_loss_pde} and boundary loss $\cL_{\cB,p}$ to demonstrate that the primary variable converges to a generalized solution.
The use of distinct loss terms in each theorem clarifies their role and emphasizes their importance in attaining the solution.

\subsection{Discussion and Implications}
We analyzed the failure of PINNs in Section \ref{sec:pinn_failure} and the convergence of the proposed VS-PINNs in this section.
In addition to convergence, VS-PINNs benefit from either parameterizing the gradient or lowering the derivative order in PDEs.
Moreover, VS-PINNs formulate the loss directly from the PDE without having to derive an integral form to approximate the generalized solution and do not require a hand-crafted approach for solving different PDEs.
We discuss the advantages and implications of VS-PINNs.

\begin{itemize}
\item \textbf{Equivalence of the problem.}
We first discuss the equivalence between VS-PINNs and PINNs in terms of the optimization problem.
PINNs and VS-PINNs correspond to the soft penalization of the equivalent PDEs \eqref{eq:pde} and \eqref{eq:pde_split}, respectively, indicating that both methods originate from the same problem.
One might consider the variable splitting strategy as a constrained optimization version of the residual minimization of PINNs \eqref{eq:pinn} by employing an auxiliary variable as follows:
\begin{align}\label{eq:split_constrained}
    \begin{split}
    \underset{u,V}{\text{minimize  }}\ & \lambda_{\cN}\left\Vert\cN \left[u,V\right]-f\right\Vert_{L^p\left(\Omega\right)} + \lambda_\cB\left\Vert u-g \right\Vert_{L^p\left(\partial\Omega\right)}, \\
    \text{subject to } &\ D u = V,
    \end{split}
\end{align}
which remains an equivalent problem to PINNs \eqref{eq:pinn}.

\item \textbf{Convergence from independently parameterizing the gradient.}
Although PINNs and VS-PINNs solve an equivalent problem, their difference in convergence is noticeable, as demonstrated in Sections \ref{sec:pinn_failure} and \ref{sec:variable_splitting}.
PINNs cannot guarantee convergence to the desired PDE solution even if the loss approaches zero (Criterion \ref{crit:loss}) and the network output converges (Criterion \ref{crit:network}), as discussed in Section \ref{sec:pinn_failure}.
In contrast, VS-PINNs guarantee the convergence of the primary variable to a generalized solution of the governing PDE if the loss approaches zero and the output of the network converges.
Indeed, the proposed method requires an even weaker assumption than Criterion \ref{crit:network}, as Theorem \ref{thm:conv_weak_2ndlinear} requires only the convergence of the auxiliary variables.
The most significant factor is the convergence of the gradient; hence, if PINNs can ensure the convergence of both the network and its gradient, we can consider the convergence to a generalized solution.
However, evaluating or quantifying a network's gradient convergence is challenging, which is beyond the scope of Criteria \ref{crit:loss} and \ref{crit:network}.

\item \textbf{Absence of higher-order derivatives in training.}
As previously mentioned in Section \ref{sec:pinn_failure}, it has been highlighted that PINN loss with high-order partial derivatives poses a challenge for gradient-based optimizers when minimizing the loss.
In addition, when handling high-order PDEs in high-dimensional domains, PINNs encounter an increasing computational cost as the order of the PDE increases, resulting in substantial inefficiency from stacked back-propagation.
As a result, PINNs become impractical for large-scale settings.
By contrast, variable splitting excludes higher-order derivatives by expressing the second-order PDE \eqref{eq:pde} solely in terms of first-order derivatives, as in \eqref{eq:pde_split}.
Moreover, the variable splitting strategy is not restricted to second-order PDEs, as noted in Remark \ref{rem:split_higher_order}.
We expect that VS-PINNs would be more efficient than PINNs as the order of the PDEs increases; this remains to be proven in future research.

\item \textbf{Mitigation of unnecessary higher-order regularities.}
Variable splitting also helps to alleviate the regularity issue raised for PINNs when approximating generalized solutions rather than classical solutions.
PINNs require that the decision variable has the same degree of regularity as the order of the governing PDE.
This poses the concern that they may have struggled to approximate generalized solutions that are not necessarily differentiable everywhere by that order \citep{shin2020convergence,zang2020weak}.
For second-order linear PDEs \eqref{eq:pde}, VS-PINNs require first-order derivatives, while PINNs necessitate computation of a second-order derivative of the predicted solution.

\item \textbf{Wider searching space.}
A decrease in the derivative order offers the benefit of approximating the solution within a wider function space.
PINNs require the searching space of the decision variable $u$ to be $W^{2,p}\left(\Omega,\bR\right)$, which may exclude a true generalized solution $u^\ast\in W^{1,p}\left(\Omega,\bR\right)$.
Alternatively, VS-PINNs can exploit a sufficiently wide function space that includes the target generalized solution.
Using the auxiliary variable $V\in W^{1,p}\left(\Omega,\bR^d\right)$, VS-PINNs expand the searching space for the primary variable $u$ to $W^{1,p}\left(\Omega,\bR\right)$, where a generalized solution is defined.

This implies that VS-PINNs can use a broader range of neural networks to represent the primary variable $u$.
In feed-forward neural networks, activation functions determine the regularity of the networks.
Thus, it is possible to utilize sharp activation functions such as rectified linear units (ReLU) and leaky-ReLU, which are not feasible for PINNs.
Using deep networks, \cite{park2020minimum} demonstrated that ReLU networks achieve a critical width that allows a neural network to be a universal approximator of $L^p$ functions.
From this perspective, adopting ReLU-type networks is another strength of variable splitting.

\item\textbf{Direct utilization of PDEs rather than integral formulation.}
VS-PINNs yield a generalized solution by minimizing a loss function that directly incorporates the PDE without requiring an integral formulation \eqref{eq:weak_sol}.
Several methods have been proposed \citep{zang2020weak, de2022wpinns, chen2023friedrichs} to approximate generalized solution; however, most of these methods directly employ the integral formulation \eqref{eq:weak_sol} as the loss function and parameterize the test function as another network.
Nevertheless, these approaches typically result in a min-max problem, where the training is known to be significantly unstable and the network does not represent the entire test function space.
On the other hand, VS-PINNs are free from these issues because they solve the residual minimization of a given PDE without using an integral formulation.

\end{itemize}

\section{Conclusion}\label{sec:conclusion}
Recently, using deep learning techniques for the numerical solutions of PDEs has rapidly grown alongside their advances, with PINNs being one of the most prominent approaches.
Despite the recent surge in studies on PINNs across various fields, they often struggle to accurately approximate the solution of the governing PDE. 

We showed the fundamental problem with PINNs that convergence to a PDE solution cannot be guaranteed, even if the loss is sufficiently reduced to zero. 
This indicates that the premise underlying the residual minimization of PINNs is invalid. 
Previous studies on the failure of PINNs have often assumed this premise to be true; however, its validity has not been thoroughly analyzed theoretically.  
Given that the primary goal of numerical methods is to approximate analytically intractable solutions of PDEs for many applications, non-convergence precludes the use of PINNs in real-world scenarios. 
Moreover, the fundamental problem with PINNs stems from their inability to control the gradient of the predicted solutions. 
The derivative pathology of PINNs, which has been previously overlooked, highlights the importance of considering derivatives to approximate the solution of PDEs using PINNs. 

Our new theoretical understanding of the failure modes of PINNs allowed us to address the derivative pathology by proposing a variable splitting strategy that parameterizes the gradient of the solution as an auxiliary variable. 
In our study, we showed that the variable splitting, by incorporating an auxiliary variable, effectively addresses the issue of derivative pathology.
This approach successfully ensures convergence to a generalized solution of second-order linear PDEs.
Convergence was proven for the generalized solution, which is a broader concept than the classical solution, making it applicable to various problems. 
In addition to convergence, variable splitting offers various benefits, such as mitigating the inefficiencies of PINNs by reducing the order of derivatives in the loss function. 
In short, this study takes a step forward in developing a reliable methodology for deep-learning-based techniques for solving PDEs, where existing methods lack theoretical understanding.

Although we presented a variable splitting strategy with provable convergence guarantees, we are still in the early stages of building reliable methods, which raises numerous questions regarding both VS-PINNs and PINNs.

In practice, a discretized empirical loss is used instead of a continuous loss. 
In PINNs, the generalization error is considered in terms of the empirical loss and the distribution of collocation points, assuming that the premise holds and that the minimizer of the continuous loss becomes a solution to the governing PDE.
Because VS-PINNs guarantee this premise, the generalization error is a concern for practical use.

There is also a question regarding the efficiency of the optimization:
Does variable splitting alleviate the optimization problem of PINNs by representing a PDE with lower-order derivatives? Based on optimization literature, advanced optimization schemes such as the augmented Lagrangian method can be applied to VS-PINNs.
Studies on the convergence rate of VS-PINNs and whether advanced schemes accelerate convergence are new research topics. 
On the other hand, the use of an additional variable and gradient matching loss term in VS-PINNs may hinder optimization in practice.
This is because it involves optimizing over two independent variables, which may lead to an increase in spurious local minima. Specifically, a stable solution could potentially exist where no variable can improve its situation by unilaterally changing its state.
Also, the inclusion of the gradient matching loss may exacerbate discrepancies between loss terms.
These pose another intriguing future work.

However, there is scope for improving the theoretical results of this study.
The development of variable splitting is not confined to second-order linear PDEs but can be straightforwardly generalized to other higher-order or nonlinear PDEs.
However, convergence to other PDEs has yet to be proven and remains a topic for future research. In addition, we assumed convergence of the auxiliary variable.
We expect to dispense with assumption on an auxiliary variable, with additional terms such as various regularization terms.
In addition to the VS-PINNs, the convergence analysis of PINNs with regularization, beyond the most basic PINN loss we covered, would also be an interesting research direction.

The feasibility of Criterion \ref{crit:loss} also needs to be considered for PDEs with generalized solutions.
While a generalized solution exists in the searching space for the primary variable of VS-PINNs, it is still unclear whether every generalized solution minimizes the proposed VS-PINN loss to zero.
Consequently, another crucial aspect that remains open for both PINNs and VS-PINNs is the existence of networks with sufficiently small loss.
Note that
it is feasible for classical solutions of PDEs.
Specifically, a classical solution minimizes the loss to zero, and hence its approximation has a small loss.
Therefore, Criterion \ref{crit:loss} can be satisfied for any PDEs with a classical solution. 
We would like to emphasize that even in this case, only VS-PINNs guarantee convergence.
As demonstrated in Sections \ref{sec:pinn_failure} and \ref{sec:convergence_VS}, PINNs may fail even when the decision variable satisfies both criteria, but VS-PINNs converge to a generalized solution.

We believe that answering these questions not only helps to better understand the proposed method, but also paves a new way for developing reliable scientific machine learning models, as needed for many critical applications in science and engineering.


\acks{We would like to acknowledge support for this study
from the NRF grant [2021R1A2C3010887]}.

\vskip 0.2in
\bibliography{mybib}

\end{document}